\def\eqref#1{equation~\ref{#1}}
\def\1{\bm{1}}
\DeclareMathAlphabet{\mathsfit}{\encodingdefault}{\sfdefault}{m}{sl}
\SetMathAlphabet{\mathsfit}{bold}{\encodingdefault}{\sfdefault}{bx}{n}
\newcommand{\tens}[1]{\bm{\mathsfit{#1}}}
\def\tA{{\tens{A}}}
\def\tG{{\tens{G}}}
\def\tH{{\tens{H}}}
\def\gO{{\mathcal{O}}}
\newcommand{\E}{\mathbb{E}}
\newcommand{\R}{\mathbb{R}}
\DeclareMathOperator{\Tr}{Tr}
\tikzstyle{block} = [rectangle, draw, fill=blue!20, 
\tikzstyle{line} = [draw, -latex']
\tikzstyle{cloud} = [draw, ellipse,fill=red!20, node distance=3cm,
\theoremstyle{plain}
\newtheorem{theorem}{Theorem}[section]
\newtheorem{lemma}[theorem]{Lemma}
\newtheorem{condition}[theorem]{Condition}
\newtheorem{setup}[theorem]{Setup}
\theoremstyle{definition}
\newtheorem{definition}[theorem]{Definition}
\theoremstyle{remark}
\newtheorem{remark}[theorem]{Remark}
\def\ddefloop#1{\ifx\ddefloop#1\else\ddef{#1}\expandafter\ddefloop\fi}
\def\ddef#1{\expandafter\def\csname bb#1\endcsname{\ensuremath{\mathbb{#1}}}}
\def\ddef#1{\expandafter\def\csname c#1\endcsname{\ensuremath{\mathcal{#1}}}}
\newcommand*\circled[1]{\tikz[baseline=(char.base)]{
            \node[shape=circle,draw,inner sep=2pt] (char) {#1};}}
\newenvironment{hproof}{%
  \proof}{\endproof}
\def\tr{\operatorname{tr}}
\def\matmul{\color{magenta}\texttt{MatMul}}
\def\lincomb{\color{magenta}\texttt{LinComb}}
\def\nonlin{\color{magenta}\texttt{NonLin}}
\def\trsp{\color{magenta}\texttt{Trsp}}
\def\S{\mathbb{S}}
\def\wh{\widehat}
\DeclarePairedDelimiter\autobracket{(}{)}
\newcommand{\p}[1]{\autobracket*{#1}}
\newcommand{\ip}[2]{\left\langle #1, #2 \right \rangle}
\theoremstyle{definition}
\title{On the Neural Tangent Kernel of Equilibrium Models}
\author{Zhili Feng \\
Carnegie Mellon University\\
\texttt{zhilif@andrew.cmu.edu} \\
\and
J. Zico Kolter \\
Carnegie Mellon University \\
Bosch Center for AI\\
\texttt{zkolter@cs.cmu.edu}\\
}
\begin{document}

\maketitle

\begin{abstract}
This work studies the neural tangent kernel (NTK) of the deep equilibrium (DEQ) model, a practical ``infinite-depth'' architecture which directly computes the infinite-depth limit of a weight-tied network via root-finding. Even though the NTK of a fully-connected neural network can be stochastic if its width and depth both tend to infinity simultaneously, we show that contrarily a DEQ model still enjoys a deterministic NTK despite its width and depth going to infinity at the same time under mild conditions. Moreover, this deterministic NTK can be found efficiently via root-finding. 
\end{abstract}

\section{Introduction}
Implicit models form a new class of machine learning models where instead of stacking explicit ``layers'', they output $z$ s.t $g(x,z)=0$, where $g$ can be either a fixed point equation \citep{bai2019deep}, a differential equation \citep{chen2018neural}, or an optimization problem \citep{gould2019deep}. This work focuses on deep equilibrium models, a class of models that effectively represent a ``infinite-depth'' weight-tied network with input injection. Specifically, let $f_\theta$ be a network parameterized by $\theta$, let $x$ be an input injection, DEQ finds $z^*$ such that $f(z^*, x)=z^*$, and uses $z^*$ as the input for downstream tasks. One interesting question to ask is, what will DEQs become if their widths also go to infinity? It is well-known that at certain random initialization, neural networks of various structures converge to Gaussian processes as their widths go to infinity \citep{neal1996priors,lee2017deep,yang2019tensor,matthews2018gaussian,novak2018bayesian,garriga2018deep}. Recent deep learning theory advances have also shown that in the infinite width limit, with proper initialization (the NTK initialization), training the network $f_\theta$ with gradient descent is equivalent to solving kernel regression with respect to the neural tangent kernel (NTK) \citep{arora2019exact,jacot2018neural,yang2019tensor,huang2020deep}. These kernel regimes provide important insights to understanding how neural networks work.

However, the infinite depth (denote depth as $d$) regime introduces several caveats. Since the NTK correlates with the infinite width (denote width as $n$) limit, a question naturally arises as how do we let $n, d\to\infty$? \citet{hanin2019finite} proved that as long as $d/n\in(0,\infty)$, the NTK of vanilla fully-connected neural network (FCNN) becomes stochastic. On the other hand, if we first take the $n\to\infty$, then $d\to\infty$\footnote{The computed quantity is $\lim_{d\to\infty}\lim_{n\to\infty} \Theta^{(d)}_n(x,y)$.}, \citet{jacot2019freeze} showed that the NTK of a FCNN converges either to a constant (\textit{freeze}), or to the Kronecker Delta (\textit{chaos}). In this work, we prove that with proper initialization, the NTK-of-DEQ enjoys a limit exchanging property $\lim_{d\to\infty}\lim_{n\to\infty} \Theta^{(d)}_n(x,y)=\lim_{n\to\infty}\lim_{d\to\infty} \Theta^{(d)}_n(x,y)$ with high probability, where $\Theta^{(d)}_n$ denotes the empirical NTK of a neural network with $d$ layers and $n$ neurons each layer. Intuitively, we name the left hand side ``DEQ-of-NTK'' and the right hand side ``NTK-of-DEQ''. The NTK-of-DEQ converges to meaningful deterministic fixed points that depend on the input in a non-trivial way, thus avoiding the freeze vs. chaos scenario. Furthermore, analogous to DEQ models, we can compute these kernels by solving fixed point equations, rather than iteratively applying the updates as for traditional NTK.
 We evaluate our approach and demonstrate that it matches the performance of existing regularized NTK methods.

\section{Background and Preliminaries}
A vanilla FCNN has the form $g^{(t)}=\sigma(W^{(t)}g^{(t-1)}+b^{(t)})$ for the $t$-th layer, and in principle $t$ can be as large as one wants. A weight-tied FCNN with input injection (FCNN-IJ) makes the bias term related to the original input and ties the weight in each layer by taking the form $z^{(t)}:=f(z^{(t-1)}, x)=\sigma(W z^{(t-1)}+Ux+b)$. \citet{bai2019deep} proposed the DEQ model, which can be equivalent to running an infinite-depth FCNN-IJ, but updated in a more clever way. The forward pass of DEQ is done by solving $f(z^*, x)=z^*$. For a stable system, this is equivalent to solving $\lim_{t\to\infty}f^{(t)}(z^{(0)}, x)$. The backward iteration is done by computing $df(z^*, x)/dz^*$ directly through the implicit function theorem, thus avoiding storing the Jacobian for each layer. This method traces back to some of the original work in recurrent backpropagation \citep{almeida1990learning,pineda1988generalization}, but with specific emphasis on: 1) computing the fixed point directly via root-finding rather than forward iteration; and 2) incorporating the elements from modern deep networks in the single ``layer'', such as self-attention transformers \citep{bai2019deep}, multi-scale convolutions \citep{bai2020multiscale}, etc. DEQ models achieve nearly state-of-the-art performances on many large-scale tasks including the CityScape semantic segmentation and ImageNet classification, while only requiring constant memory. 
Although a general DEQ model does not always guarantee to find a stable fixed point, with careful parameterization and update method, monotone operator DEQs can ensure the existence of a unique stable fixed point \citep{winston2020monotone}. 

The study of large width limits of neural networks dates back to \citet{neal1996priors}, who first discovered that a single-layered network with randomly initialized parameters becomes a Gaussian process (GP) in the large width limit. Such connection between neural networks and GP was later extended to multiple layers \citep{lee2017deep,matthews2018gaussian} and various other architectures \citep{yang2019tensor,novak2018bayesian,garriga2018deep}. The networks studied in this line of works are randomly initialized, and the GP kernels they induce are often referred to as the NNGP. 

A line of closely-related yet orthogonal work to ours is the mean-field theory of neural networks. This line of work studies the relation between depth and large-width networks (hence a GP kernel in limit) at initialization. 
\citet{poole2016exponential,schoenholz2016deep} showed that at initialization,
the correlations between all inputs on an infinitely wide network become either perfectly correlated (\textit{order}) or decorrelated (\textit{chaos}) as depth increases. They suggested we should initialize the neural network on the ``edge-of-chaos'' to make sure that signals can propagate deep enough in the forward direction, and the gradient does not vanish or explode during backpropagation \citep{raghu2017expressive,schoenholz2016deep}. These mean-field behaviors were later proven for various other structures like RNNs, CNNs, and NTKs as well \citep{chen2018dynamical,xiao2018dynamical,gilboa2019dynamical,hayou2019mean}. We emphasize that despite the similar appearance, our setting avoids the order vs. chaos scheme completely by adding input injection. The injection guarantees the converged NTK depends nontrivially on the inputs, as we will see later in the experiments.

While previous results hold either only at initialization or networks with only last layer trained, analogous limiting behavior was proven by \citet{jacot2018neural} to hold for fully-trained networks as well. They showed the kernel induced by a fully-trained infinite-width network is the following:
\begin{align}\label{eq:fullntk}
	\Theta(x,y) = \E_{\theta\sim \cN}\left[ \ip{\frac{\partial f(\theta ,x)}{\partial \theta} }{\frac{\partial f(\theta ,y)}{\partial \theta} }\right],
\end{align}
where $\cN$ represents the Gaussian distribution. They also gave a recursive formulation for the NTK of FCNN. \citet{arora2019exact,alemohammad2020recurrent,yang2020tensor} later provided formulation for convolutional NTK, recurrent NTK, and other structures.

One may ask what happens if both the width and the depth in a fully-trained network go to infinity. This question requires careful formulations as one should consider the order of two limits, as \citet{hanin2019finite} proved that width and depth cannot simultaneously tend to infinity and result in a deterministic NTK, suggesting one cannot always swap the two limits. An interesting example is that \citet{huang2020deep} showed that the infinite depth limit of a ResNet-NTK is deterministic, but if we let the width and depth go to infinity at the same rate, the ResNet behaves in a log-Gaussian fashion \citep{li2021future}. Meanwhile, the infinite depth limit of NTK does not always present favorable properties. It turns out that the vanilla FCNN does not have a meaningful convergence: either it gives a constant kernel or the Kronecker Delta kernel \citep{jacot2019freeze}. 

\paragraph{Our contributions.} We first show that unlike the infinite depth limit of NTK to FCNN, the \textit{DEQ-of-NTK} does not converge to a degenerate kernel. This non-trivial kernel can be computed efficiently using root-finding. Moreover, the \textit{NTK-of-DEQ} coincides with the DEQ-of-NTK under mild conditions. Although the proofs here involved infinite limits, we also show numerically that reasonably large networks converge to roughly the same quantities as predicted by theory, and we show the NTK-of-DEQ matches the performances of other NTKs on real-world datasets.


\subsection{Notation}
We write capital letter $W$ to represent matrices or tensors, which should be clear from the context, and use $[W]_i$ to represent the element of $W$ indexed by $i$. We write lower case letter $x$ to represent vectors or scalars. For $a\in\mathbb Z_+$, let $[a]=\{1,\ldots, a\}$. Denote $\sigma(x)=\sqrt{2}\max(0, x)$ as the normalized ReLU and $\dot\sigma$ its derivative (which only needs to be well-defined almost everywhere). The symbol $\sigma_a^2$ with subscript is always used to denote the variance of random variable $a$. We write $\cN(\mu, \Sigma)$ as the Gaussian distribution with mean $\mu\in\R^d$ and covariance matrix $\Sigma\in\R^{d\times d}$.  We let $\S^{d-1}$ be the unit sphere embedded in $\R^d$. We use $n, d$ to denote width and depth respectively, and write $G^{(d)}_n$ to stress $G$ has depth $d$ and width $n$, where $G$ can represent either a kernel or a neural network. We use the term \textit{empirical NTK} to represent $\ip{\frac{\partial f^{(d)}_n(\theta, x)}{\partial \theta}}{\frac{\partial f^{(d)}_n(\theta, y)}{\partial \theta}}$. We write $G^{(d)}=\lim_{n\to\infty} G^{(d)}_n$ , $G_n=\lim_{d\to\infty} G^{(d)}_n$, and $G=\lim_{n, d\to\infty} G^{(d)}_n$ to denote limits are taken. All missing proofs can be found in the appendix.

\section{NTK-of-DEQ with Fully-connected Layers}\label{sec:deqntk}
In this section, we show how to derive the NTK of the fully-connected DEQ.

Let $m$ be the input dimension, $x, y\in\S^{m-1}$ be a pair of inputs, $n$ be the width of the $h$-th layers where $h\in[d]$. Let $g^{(0)}(x)=\mathbf{0}\in\R^n$. Define the depth-$d$ approximation to a DEQ as the following:

\begin{align*}
    &  f^{(h)}_n(x) = \sqrt{\frac{\sigma_W^2}{n}} {W^{(h)}}{g^{(h-1)}(x)}+\sqrt{\frac{\sigma_U^2}{n}}{U^{(h)}}{x}+\sqrt{\frac{\sigma_b^2}{n}}{b^{(h)}},\\
    &g^{(h)}_n(x) = \sigma(f^{(h)}(x)),\ f^{(d+1)}_n(x) = \sigma_v\cdot v^Tg^{(d)}(x),
\end{align*}

where $h\in[d]$, $W^{(h)}\in\R^{n\times n}$, $U^{(h)}\in\R^{n\times m}$, $v\in\R^{n}$ are the internal weights and $b^{(h)}\in\R^n$ are the bias terms.

The actual DEQ effectively outputs 
$
	f^{(\infty)}_n=\sigma_v\cdot v^Tg^{(\infty)}_n(x):=\sigma_v\cdot v^T \p{\lim_{d\to\infty}g^{(d)}_n(x)}.
$
The forward pass is solved using root-finding or fixed point iteration, and the backward gradient is calculated using implicit function theorem instead of backpropogation.

One thing to note is that usually DEQs require tied-weights: $W^{(h)}=W$, $U^{(h)}=U$. and $b^{(h)}=b$ for all $h$. It turns out for the infinite width regime, DEQ with tied weights and DEQ without tied weights will induce the same NTK. We will discuss this point in more detail later.


Let $ \Theta_{n}^{(d)}(x,y)$ be the empirical NTK of $f_n^{(d)}$. In \cref{subsec:finitedepthntk}, we will derive for an arbitrarily fixed $d$, the ``finite depth iteration to DEQ-of-NTK'' $\Theta^{(d)}=\lim_{n\to\infty}\Theta_{n}^{(d)}$. In \cref{subec:convergencedeqntk}, we show that $\Theta^{(d)}$ converges to a deterministic DEQ-of-NTK. Furthermore, we prove that $\lim_{d\to\infty}\lim_{n\to\infty}\Theta_{n}^{(d)}=\lim_{n\to\infty}\lim_{d\to\infty}\Theta_{n}^{(d)}$ with high probability, that is, the DEQ-of-NTK equals the NTK-of-DEQ.
%
 
\subsection{Finite Depth Iteration to DEQ-of-NTK}\label{subsec:finitedepthntk}
Under the expressions in the beginning of \cref{sec:deqntk}, let us pick $\sigma_W,\sigma_U,\sigma_b\in\R$ arbitrarily in this section, and require the following NTK initialization.

\paragraph{NTK initialization.} We randomly initialize every entry of every $W, U, b, v$ from  $\cN(0, 1)$. 

The finite depth iteration to the DEQ-of-NTK can be expressed as the following:
 \begin{restatable}{theorem}{deqntk}\label{thm:deqntk}
 	Recursively define the following quantities for $h\in[d]$:
 	
	
	\begin{multicols}{2}
	\noindent
	\begin{align}
		&\Sigma^{(0)}(x, y)=x^\top y\label{eq:deqntksigmainit}\\
		& \resizebox{0.8\hsize}{!}{$\Lambda^{(h)}(x,y) = \begin{pmatrix} \Sigma^{(h-1)}(x,x) &\Sigma^{(h-1)}(x,y) \label{eq:deqntklambda}\\
			\Sigma^{(h-1)}(y,x) & \Sigma^{(h-1)}(y,y)
		\end{pmatrix}$}\\
		& \Sigma^{(h)}(x,y) = \sigma_W^2\mathop{\E}_{\substack{(u, v) \sim\\ \cN(0, \Lambda^{(h)})}}[\sigma(u)\sigma(v)]\notag\\
				&\qquad\qquad\quad+\sigma^2_Ux^\top y +\sigma^2_b	 \label{eq:deqntksigma}
		\end{align}
		\columnbreak
		\begin{align}
			& \dot\Sigma^{(h)}(x,y)=\sigma_W^2 \mathop{\E}_{\substack{(u, v)\sim\\ \cN(0, \Lambda^{(h)})}}[\dot\sigma(u)\dot\sigma(v)] \label{eq:deqntkdotsigma}\\
			& \Sigma^{(d+1)}(x,y)= \sigma_v^2\mathop{\E}_{\substack{(u, v) \sim\\ \cN(0, \Lambda^{(h)})}}[\sigma(u)\sigma(v)]\\
			& \dot\Sigma^{(d+1)}(x,y)=\sigma_v^2\mathop{\E}_{\substack{(u, v)\sim\\ \cN(0, \Lambda^{(h)})}}[\dot\sigma(u)\dot\sigma(v)]
		\end{align}
	\end{multicols}
	Then the $d$-depth iteration to the DEQ-of-NTK can be expressed as:
	\begin{align}\label{eq:deqntk}
	  \Theta^{(d)}(x,y) = \sum_{h=1}^{d+2}\left( \left(\Sigma^{(h-1)}\left(x, y\right)\right) \cdot \prod_{h^{\prime}=h}^{d+2} \dot{\Sigma}^{\left(h^{\prime}\right)}\left(x, y\right)\right),
	\end{align}
	where by convention we set $\dot\Sigma^{(d+2)}(x,y)=1$.
 \end{restatable}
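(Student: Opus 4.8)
\emph{Overall strategy.} Because the depth $d$ is held fixed while $n\to\infty$, \cref{thm:deqntk} concerns a finite-depth network, and I would prove it by the now-standard two-stage derivation of NTKs \citep{jacot2018neural,arora2019exact,yang2020tensor}; the only feature not already covered by that template is the input injection $U^{(h)}x$ and the bias $b^{(h)}$ present at every layer, which merely contribute the additive pieces $\sigma_U^2 x^\top y$ and $\sigma_b^2$ to each rung of the covariance recursion. All limits below are in probability over the i.i.d.\ $\cN(0,1)$ initialization, for an arbitrary fixed finite set of inputs on $\S^{m-1}$, and can be upgraded to quantitative $\widetilde O(1/\sqrt n)$ statements as in \citet{arora2019exact}. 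I write $f^{(d+1)}_n$ for the scalar readout and $\Theta^{(d)}_n(x,y)=\ip{\partial_\theta f^{(d+1)}_n(x)}{\partial_\theta f^{(d+1)}_n(y)}$.

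\emph{Step 1: forward / NNGP limit.} First I would show, by induction on the layer index (seeded by the input Gram $\Sigma^{(0)}(x,y)=x^\top y$), that the pre-activations $f^{(h)}_n(x)$ converge to coordinate-wise i.i.d.\ centered Gaussians with covariance kernel $\Sigma^{(h)}$. The inductive step conditions on all parameters of layers below $h$: conditionally, $f^{(h)}_n(x)_i$ is a sum of three independent centered Gaussians (from $W^{(h)},U^{(h)},b^{(h)}$), hence conditionally Gaussian with conditional cross-covariance $\tfrac{\sigma_W^2}{n}\ip{g^{(h-1)}(x)}{g^{(h-1)}(y)}+\sigma_U^2 x^\top y+\sigma_b^2$; the inductive hypothesis gives $(f^{(h-1)}(x),f^{(h-1)}(y))\Rightarrow\cN(0,\Lambda^{(h)})$, and a law of large numbers over the $n$ coordinates turns $\tfrac1n\ip{g^{(h-1)}(x)}{g^{(h-1)}(y)}$ into $\E_{(u,v)\sim\cN(0,\Lambda^{(h)})}[\sigma(u)\sigma(v)]$, which assembles to $\Sigma^{(h)}$; the readout layer is identical with $\sigma_v^2$ in place of $\sigma_W^2$, producing $\Sigma^{(d+1)}$. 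Rigorously this is an instance of the netsor Master Theorem of \citet{yang2020tensor}, or of the Gaussian-process arguments of \citet{lee2017deep,matthews2018gaussian}, and is routine here since the depth is bounded.

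\emph{Step 2: NTK / backward limit.} Decompose $\Theta^{(d)}_n$ into the contributions of the parameter blocks $\{W^{(h)},U^{(h)},b^{(h)}\}_{h=1}^d$ and $v$. By the chain rule each such gradient factors as (the signal entering layer $h$: $g^{(h-1)}$, $x$, or the constant $1$) tensored with the backward sensitivity $\delta^{(h)}_n:=\partial f^{(d+1)}_n/\partial f^{(h)}_n$, so every block contributes a forward Gram times a backward Gram. The forward Grams are supplied by Step~1, together with $x^\top y$ and $1$ for the injection and bias blocks. The backward Gram is the crux: $\delta^{(h)}_n$ is built from the transposes $(W^{(k)})^\top$ for $k>h$, which are the very matrices used in the forward pass, so $\delta^{(h)}_n$ and $g^{(h-1)}$ are not independent. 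The standard remedy --- ``gradient independence,'' made rigorous by the tensor-program machinery of \citet{yang2020tensor} or by the explicit second-moment estimate in \citet{arora2019exact} --- shows that in the width limit these transposes decorrelate from the forward activations and may be treated as fresh independent Gaussians, so that $\tfrac1n\ip{\delta^{(h)}_n(x)}{\delta^{(h)}_n(y)}$ converges to a product of factors $\dot\Sigma^{(h')}$, one per layer traversed toward the output, with $\dot\Sigma^{(d+2)}:=1$ accounting for the linear readout. Multiplying the common backward factor of the layer-$h$ blocks by their combined forward factor $\sigma_W^2\E_{(u,v)\sim\cN(0,\Lambda^{(h)})}[\sigma(u)\sigma(v)]+\sigma_U^2 x^\top y+\sigma_b^2$, and adding the readout block, telescopes into $\sum_{h=1}^{d+2}\Sigma^{(h-1)}(x,y)\prod_{h'=h}^{d+2}\dot\Sigma^{(h')}(x,y)$; the bookkeeping of which $\Sigma$ and which $\dot\Sigma$ enters which summand is exactly as in the fully-connected NTK of \citet{arora2019exact}, enriched by the $\sigma_U^2,\sigma_b^2$ terms.

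\emph{Main obstacle.} Step~1 and the forward half of Step~2 are routine Gaussian conditioning and concentration. The one genuinely delicate ingredient is the backward Gram limit of Step~2 --- establishing gradient independence and identifying each backward layer's contribution as $\dot\Sigma^{(h')}$ --- together with checking that the $O(1/\sqrt n)$ error terms can be summed over the layers. Since $d$ is fixed no depth-uniform control is needed, which is precisely why \cref{thm:deqntk}, in contrast to the $d\to\infty$ statements established later in this section, requires nothing beyond the existing infinite-width toolkit. (The theorem is stated with untied $W^{(h)},U^{(h)},b^{(h)}$; that this agrees with the genuinely weight-tied DEQ in the limit is a separate point, addressed subsequently.)
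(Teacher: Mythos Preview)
Your proposal is correct and follows essentially the same route as the paper: establish the forward covariance recursion $\Sigma^{(h)}$ by Gaussian conditioning and a law of large numbers, then decompose the empirical NTK into the parameter blocks $W,U,b,v$, factor each via the chain rule into a forward Gram times the backward product $\prod_{h'}\dot\Sigma^{(h')}$, and invoke gradient independence (tensor programs / \citet{arora2019exact}) for the backward half. The paper's proof is simply more abbreviated---it cites \citet{arora2019exact} directly for the $W$, $b$, and $v$ blocks and only spells out the new $U$ block---whereas you describe the underlying mechanism more explicitly, but the argument is the same.
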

 

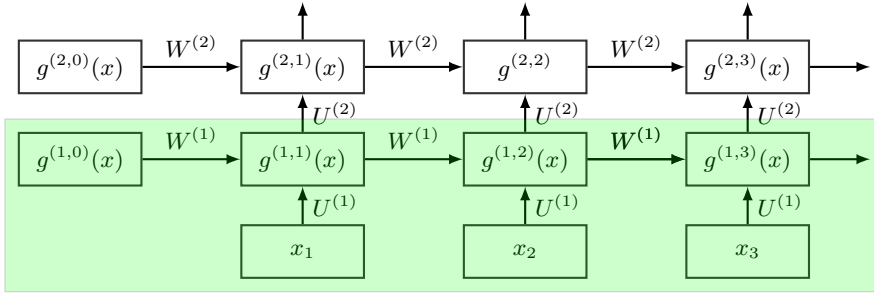
\begin{figure}[t!]
    \centering
\begin{tikzpicture}[object/.style={thin,double,<->}]
\tikzstyle{main}=[rectangle, minimum width = 16.3mm, minimum height = 7mm, thick, draw =black!80, node distance = 5mm]
\tikzstyle{main2}=[circle, minimum size = 10mm, thick, draw =black!80, node distance = 5mm]
\tikzstyle{connect}=[-latex, thick]
\tikzstyle{box}=[rectangle, draw=black!100]

  \node[main] (hlt) [] {\small$g^{(2,2)}$};
  \node[main] (hltm) [left=1.3cm of hlt] {\small $g^{(2,1)}(x)$};
  \node[main] (hltp) [right=1.3cm of hlt] {\small $g^{(2,3)}(x)$};
  \node[main] (hlmt) [below=0.5cm of hlt] {\small $g^{(1,2)}(x)$};
  \node[main] (hlmtm) [below=0.5cm of hltm] {\small $g^{(1,1)}(x)$};
  \node[main] (hlmtp) [below=0.5cm of hltp] {\small $g^{(1,3)}(x)$};
  \coordinate[right=0.8cm of hltp] (hltpp);
  \coordinate[right=0.8cm of hlmtp] (hlmtpp);
  \coordinate[above=0.5cm of hltm] (hlptm);
  \coordinate[above=0.5cm of hlt] (hlpt);
  \coordinate[above=0.5cm of hltp] (hlptp);
  
  \path (hltm) edge [connect] node[above] {\small$W^{(2)}$}(hlt)
        (hlt) edge [connect] node[above] {\small$W^{(2)}$}(hltp)
        (hlmtm) edge [connect] node[above] {\small$W^{(1)}$}(hlmt)
        (hlmt) edge [connect] node[above] {\small$W^{(1)}$}(hlmtp)
		(hlmt) edge [connect] node[above] {\small$W^{(1)}$} (hlmtp)
		(hltp) edge [connect] (hltpp)
		(hlmtp) edge [connect] (hlmtpp)
		(hltm) edge [connect] (hlptm)
		(hlt) edge [connect] (hlpt)
		(hltp) edge [connect] (hlptp)
		(hlmtm) edge [connect] node[right] {\small$U^{(2)}$}(hltm)
		(hlmt) edge [connect] node[right] {\small$U^{(2)}$}(hlt)
		(hlmtp) edge [connect] node[right] {\small$U^{(2)}$} (hltp)
		;
	\node[rectangle, fit= (hlt),opacity=.2](t) {};

	
 \node[main] (hltz) [left=1.3cm of hltm] {\small $g^{(2,0)}(x)$};
  \node[main] (hlmtz) [left=1.3cm of hlmtm] {\small $g^{(1,0)}(x)$};
  \path (hltz) edge [connect] node[above] {\small$W^{(2)}$}(hltm)
        (hlmtz) edge [connect] node[above] {\small$W^{(1)}$}(hlmtm)
        ;

\node[rectangle, fit= (hltz), opacity=.2](t) {};
\node[rectangle, fit= (hlmtz), opacity=.2](t) {};
	

\node[main] (xtm) [below=0.5cm of hlmtm] {\small $x_{1}$};
\node[main] (xt)  [below=0.5cm of hlmt] {\small $x_{2}$};
\node[main] (xtp) [below=0.5cm of hlmtp] {\small $x_{3}$};
\path (xtm) edge [connect] node[right] {\small$U^{(1)}$}(hlmtm)
(xt) edge [connect] node[right] {\small$U^{(1)}$}(hlmt)
(xtp) edge [connect] node[right] {\small$U^{(1)}$}(hlmtp);

\node[rectangle, inner sep=1.7mm,draw=black!100, fit= (hlmtz)(hlmtpp)(xtm)(xtp), fill=green,opacity=.2](t) {};

\end{tikzpicture}

    \caption{ \small
    Visualization of a simple RNN from \citet{alemohammad2020recurrent}. The green area highlights a DEQ, if $x_1, x_2,\ldots$ are all equal. 
    \vspace{-3mm}
    }
    \label{fig:rnn}
\end{figure}

One can realize that the derivation is done as if the weights in each layers are independently drawn from the previous layers, thus violating the formulation of DEQs. Nonetheless, it has been proven that under certain conditions, the tied-weight NN and untied-weight NN induce the same NTK, see \cref{rmk:tiedweights}.
\begin{remark}\label{rmk:tiedweights}
 While our derivation is done on untied weights, the NTK of its weight-tying counterpart converges to the same point. This is formally done using the Nestor program introduced in \citet{yang2019tensor,yang2020tensor}. The neural architecture needs to satisfy a gradient independent assumption. One simple check is that the output layer weights are drawn from a zero-mean Gaussian independently from any other parameters and not used anywhere in the interior of the network. This is clearly satisfied in our setting. In fact, \citet{alemohammad2020recurrent} has presented the recurrent NTK case with tied weights. Using their notation, by letting $g^{(1,0)}(\mathbf{x})=\mathbf{0}\in\R^n$, $\mathbf x$ be $T$ copies of $x$, and $T=d$ represents the depth, we exactly recover the current (finite-depth) DEQ formulation. See \cref{fig:rnn} for a visual explanation. Therefore, their conclusion directly applies to our setting. We should emphasize that our work is not a trivial extension to the recurrent NTK, because we mainly study the infinite-depth limit.
 \end{remark}

\subsection{NTK-of-DEQ equals DEQ-of-NTK}\label{subec:convergencedeqntk}
Based on \cref{eq:deqntk}, we are now ready to show what the DEQ-of-NTK $\lim_{d\to\infty}\Theta^{(d)}$ is. Then we present the main takeaway of our paper: $\lim_{d\to\infty}\Theta^{(d)} = \lim_{n\to\infty}\lim_{d\to\infty}\Theta_{n}^{(d)}$. By convention, we assume the two samples $x,y\in \S^{d-1}$, and we require the parameters $\sigma_W^2, \sigma_U^2, \sigma_b^2$ obey the following DEQ-NTK initialization:

\paragraph{DEQ-NTK initialization.} Let every entry of every $W, U, b, v$ follows the NTK initialization described in \cref{subsec:finitedepthntk}, as well as the additional requirement $\sigma_W^2+\sigma_U^2+\sigma_b^2=1$. 

Let the nonlinear activation function $\sigma$ be the normalized ReLU: $\sigma(x) = \sqrt{2}\max(0, x)$ from now on.



Using normalized ReLU along with DEQ-NTK initialization, we can derive the main convergence theorem:
\begin{restatable}{theorem}{depntkconverge}\label{thm:depntkconverge}
	Use same notations and settings in \cref{thm:deqntk}, the DEQ-of-NTK is	
	\begin{align}\label{eq:fixpointofdeqntk}
		\Theta(x,y)\triangleq\lim_{d\to\infty} \Theta^{(d)}(x, y) = \frac{ \sigma_v^2\dot\rho^*\Sigma^*(x,y)}{1-  \dot\Sigma^*(x,y)}+\sigma_v^2\rho^*,
	\end{align}
	where $\Sigma^*(x,y)\triangleq\rho^*$ is the root of $R_\sigma(\rho)-\rho$,
	\begin{align}\label{eq:dualactivationsigmawithinjection}
	\begin{split}
	  R_\sigma(\rho)
	  \triangleq\sigma_W^2\left(\frac{\sqrt{1-\rho^{2}}+\left(\pi-\cos ^{-1}\rho\right) \rho}{\pi}\right)+\sigma_U^2x^\top y+\sigma_b^2 ,
	\end{split}
	\end{align}
	and
	
	\noindent
	\begin{minipage}{0.45\textwidth}
		\begin{align}
			&\dot\rho^*\triangleq \p{\frac{\pi-\cos ^{-1}(\rho^*)}{\pi}}
		\end{align}
	\end{minipage}
	\hfill 
	\begin{minipage}{0.5\textwidth}
		\begin{align}
			&\dot\Sigma^*(x,y)\triangleq\lim_{h\to\infty }\dot\Sigma^{(h)}(x,y) =\sigma_W^2\dot\rho^*. \label{eq:dualactivationdotsigmawithinjection}
		\end{align}
	\end{minipage}
	
\end{restatable}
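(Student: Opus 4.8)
The plan is to take the limit $d \to \infty$ directly in the recursive formulas of \cref{thm:deqntk}, using the normalized ReLU dual activation and the DEQ-NTK initialization $\sigma_W^2 + \sigma_U^2 + \sigma_b^2 = 1$. First I would establish convergence of the covariance iteration $\Sigma^{(h)}(x,y)$. Note that for the normalized ReLU, the Gaussian expectation $\sigma_W^2 \E_{(u,v)\sim\cN(0,\Lambda^{(h)})}[\sigma(u)\sigma(v)]$ has the closed form given by the arc-cosine kernel, so if I first check that the diagonal terms $\Sigma^{(h)}(x,x) = \Sigma^{(h)}(y,y)$ are preserved at the value $1$ (this uses $\|x\|=\|y\|=1$ together with the normalization $\sigma_W^2+\sigma_U^2+\sigma_b^2=1$: on the diagonal $\rho = 1$, $R_\sigma(1) = \sigma_W^2 + \sigma_U^2 + \sigma_b^2 = 1$), then the off-diagonal entry $\Sigma^{(h)}(x,y)$ evolves by the scalar map $\rho \mapsto R_\sigma(\rho)$ defined in \cref{eq:dualactivationsigmawithinjection}. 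The core analytic step is then to show this scalar iteration converges to the unique root $\rho^*$ of $R_\sigma(\rho) - \rho$ in $[-1,1]$: I would compute $R_\sigma'(\rho) = \sigma_W^2\frac{\pi - \cos^{-1}\rho}{\pi}$, observe $0 \le R_\sigma'(\rho) \le \sigma_W^2 < 1$ on $(-1,1)$ (strict since $\sigma_U^2 + \sigma_b^2 > 0$, which one should assume; the $\rho=1$ endpoint is the diagonal case handled separately), so $R_\sigma$ is a contraction and Banach's fixed point theorem gives a unique attracting fixed point $\rho^*$, with $\Sigma^{(h)}(x,y) \to \rho^* =: \Sigma^*(x,y)$.

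Next I would pass to the limit in the derivative kernel. For normalized ReLU, $\dot\sigma(u) = \sqrt{2}\,\mathbbm{1}[u>0]$, so $\dot\Sigma^{(h)}(x,y) = \sigma_W^2 \E[\dot\sigma(u)\dot\sigma(v)] = \sigma_W^2 \cdot 2 \cdot \P[u>0, v>0]$ where the correlation of $(u,v)$ is $\Sigma^{(h-1)}(x,y)$ (after normalization by the diagonal, which is $1$). By the standard orthant-probability formula, $2\P[u>0,v>0] = \frac{\pi - \cos^{-1}\rho}{\pi}$, so $\dot\Sigma^{(h)}(x,y) = \sigma_W^2\frac{\pi - \cos^{-1}(\Sigma^{(h-1)}(x,y))}{\pi}$. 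Since $\Sigma^{(h-1)}(x,y) \to \rho^*$ and $\cos^{-1}$ is continuous, $\dot\Sigma^{(h)}(x,y) \to \sigma_W^2\,\dot\rho^* =: \dot\Sigma^*(x,y)$, with $\dot\rho^* = \frac{\pi - \cos^{-1}\rho^*}{\pi}$. I should record that $\dot\Sigma^* = \sigma_W^2\dot\rho^* < 1$ — this follows because $\sigma_W^2 < 1$ and $\dot\rho^* \le 1$, and it is precisely the condition that makes the geometric-type sum in the NTK converge.

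Finally I would evaluate $\lim_{d\to\infty}\Theta^{(d)}(x,y)$ from \cref{eq:deqntk}. Write the sum as the output-layer contributions (the $h = d+1, d+2$ terms, involving $\sigma_v^2$) plus the bulk sum $\sum_{h=1}^{d}\Sigma^{(h-1)} \prod_{h'=h}^{d+2}\dot\Sigma^{(h')}$. In the bulk, for large $d$ every factor $\Sigma^{(h-1)}$ is close to $\rho^*$ and every $\dot\Sigma^{(h')}$ is close to $\sigma_v^2$ at level $d+1$ times $\dot\Sigma^*$ for the interior levels; reindexing $k = d+1-h$, the bulk sum behaves like $\sigma_v^2\,\dot\rho^*\,\rho^* \sum_{k\ge 0} (\dot\Sigma^*)^k = \frac{\sigma_v^2\dot\rho^*\rho^*}{1 - \dot\Sigma^*}$, and the top term contributes $\sigma_v^2\rho^*$ (the $\Sigma^{(d+1)}$ term with empty/trivial product), matching \cref{eq:fixpointofdeqntk}. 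To make the interchange of "$h\to\infty$ inside, then sum" rigorous I would use the uniform geometric tail bound: since $\dot\Sigma^{(h')} \le \sigma_W^2 =: q < 1$ for all $h'$ (not just in the limit), the tail of the series is dominated by $\sum_{k \ge K} q^k$ uniformly in $d$, so dominated convergence for series applies and the limit passes term-by-term. The main obstacle is this last interchange — controlling the partial sums uniformly in $d$ while the number of terms grows — but the uniform bound $\dot\Sigma^{(h')} \le q < 1$ (valid at every finite depth, courtesy of $\sigma_W^2 < 1$ under the DEQ-NTK normalization) makes it a routine tail estimate rather than a delicate one; the only real subtlety is ruling out the degenerate boundary behavior, which the injection terms $\sigma_U^2, \sigma_b^2 > 0$ precisely prevent.
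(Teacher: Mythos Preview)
Your proposal is correct and follows the paper's argument closely through the first three steps: you preserve the diagonal $\Sigma^{(h)}(x,x)=1$ via the DEQ-NTK normalization, invoke the arc-cosine closed forms for normalized ReLU, and establish convergence of $\Sigma^{(h)}(x,y)\to\rho^*$ by showing $R_\sigma$ is a contraction (with $R_\sigma'(\rho)=\sigma_W^2\frac{\pi-\cos^{-1}\rho}{\pi}<1$), from which $\dot\Sigma^{(h)}\to\sigma_W^2\dot\rho^*$ follows by continuity. These are exactly the paper's steps.

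The one place you diverge is the final computation of $\lim_{d\to\infty}\Theta^{(d)}$. You attack the sum in \cref{eq:deqntk} directly: reindex from the top, use the uniform bound $\dot\Sigma^{(h')}\le\sigma_W^2<1$ to control the tail, apply dominated convergence for series, and sum the resulting geometric series. The paper instead first invokes a separate lemma to guarantee absolute convergence of the sum, then rewrites $\Theta^{(d)}$ in the recursive form
\[
\Theta^{(d+1)}=\dot\Sigma^{(d+1)}\Theta^{(d)}+\Sigma^{(d+1)},
\]
takes $d\to\infty$ on both sides, and solves the resulting fixed-point equation $\Theta^*=\dot\Sigma^*\Theta^*+\Sigma^*$ algebraically before appending the output layer. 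Your direct-series route is a bit more hands-on (and your uniform bound $\dot\Sigma^{(h')}\le\sigma_W^2$ is actually cleaner than the paper's ``eventually $<c<1$'' argument); the paper's recursion-and-fixed-point route is slicker and generalizes more readily to settings (like the convolutional case in the appendix) where the scalar geometric sum is replaced by a linear operator equation. Either way the answer and the essential analytic content coincide. One small wording slip: you write ``for large $d$ every factor $\Sigma^{(h-1)}$ is close to $\rho^*$,'' which is false for small $h$; but your subsequent reindexing and tail bound already handle this correctly, so the argument goes through.
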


\begin{remark}
	Note our $\Sigma^*(x,y)$ always depends on the inputs $x$ and $y$, so the information between two inputs is always preserved, even if the depth goes to infinity. On the contrary, as pointed out by \citet{jacot2019freeze}, without input injection, $\Sigma^{(h)}(x,y)$ always converges to $1$ as $h\to\infty$, even if $x\neq y$.
\end{remark}

\Cref{thm:depntkconverge} provides us a way to direct calculate the DEQ-of-NTK by using root-finding algorithms. In practice, we can solve \cref{eq:dualactivationsigmawithinjection} by using any optimization method. Then $\Sigma^*$ and $\Theta^*$ can be computed in constant time. Since each pair of input $(x,y)$ is independent of all the other pairs, we can easily parallelize this computation process. Our derivation can be extended to more complicated structures like DEQ with convolution layers, see appendix for more detail.

One caveat of \cref{thm:depntkconverge} is the order of limits, notice that we first take the limit of the width, then the limit of the depth. Nonetheless, with sufficient conditions, one can indeed show that the limits can be exchanged, and the NTK-of-DEQ and the DEQ-of-NTK are equivalent.

\begin{restatable}{theorem}{exchangelimit}\label{thm:exchangelimit}
	Let $\sigma_W^2\leq 1/8$, $\Theta_{n}^{(d)}(x,y)=\sum_{h=1}^{d+1}\ip{\frac{\partial f(\theta ,x)}{\partial \theta^{(h)}} }{\frac{\partial f(\theta ,y)}{\partial \theta^{(h)}} }$ be the empirical NTK with depth $d$ and width $n$. Then $\lim_{n\to\infty}\lim_{d\to\infty} \Theta_{n}^{(d)}=\lim_{d\to\infty}\lim_{n\to\infty} \Theta_{n}^{(d)}$ in probability.
\end{restatable}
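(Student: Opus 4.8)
The plan is to exploit the two limits computed earlier: the DEQ-of-NTK $\Theta(x,y)=\lim_{d\to\infty}\lim_{n\to\infty}\Theta_n^{(d)}(x,y)$ given in closed form by \Cref{thm:depntkconverge}, and the finite-width object $\Theta_n=\lim_{d\to\infty}\Theta_n^{(d)}$, which exists because for each fixed $n$ the depth-iteration is a fixed-point iteration of a contraction once $\sigma_W^2$ is small enough. The key structural fact is that the empirical kernel at depth $d$ is built from the recursion in \Cref{thm:deqntk}, and the same recursion — viewed at the level of the random feature maps $g^{(h)}_n(x)$ — defines a map $\Phi_n$ on the space of $2\times 2$ Gram-type matrices $(\langle g^{(h)}(x),g^{(h)}(x)\rangle, \langle g^{(h)}(x),g^{(h)}(y)\rangle, \dots)$, plus the accumulated derivative-kernel product. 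So I would first show: for $\sigma_W^2 \le 1/8$, the layerwise update $\Phi_n$ (resp. its $n=\infty$ limit $\Phi_\infty$) is a contraction in an appropriate norm, with contraction constant bounded away from $1$ uniformly in $n$. The normalization $\sigma(x)=\sqrt2\max(0,x)$ and the spherical inputs make the relevant operator norm of the derivative of $\Phi$ controlled by $\sigma_W^2$ times an $O(1)$ factor coming from the dual-activation map; $1/8$ is the slack needed to absorb that factor. Contraction gives two things at once: existence and uniqueness of the fixed points $\Theta_n$ and $\Theta$, and a geometric rate $\|\Theta_n^{(d)}-\Theta_n\|\le C\,\kappa^{d}$ with $\kappa<1$ and $C$ independent of $n$ (after a burn-in to handle the $g^{(0)}=0$ start).

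Next I would control the width direction at finite depth. By \Cref{thm:deqntk} (and the standard NTK machinery, e.g. \citet{jacot2018neural,arora2019exact}, together with \Cref{rmk:tiedweights} to handle weight tying via the tensor-program / gradient-independence argument of \citet{yang2019tensor,yang2020tensor}), for each fixed $d$ we have $\Theta_n^{(d)}(x,y)\to\Theta^{(d)}(x,y)$ in probability as $n\to\infty$. Quantitatively I would record a bound of the form $\P(|\Theta_n^{(d)}(x,y)-\Theta^{(d)}(x,y)|>\varepsilon) \le p(d)\,q(n,\varepsilon)$ where $q(n,\varepsilon)\to 0$ as $n\to\infty$ for every fixed $d,\varepsilon$; the only role of $d$-dependence is that the constant in the concentration inequality grows with the number of layers (polynomially, or at worst exponentially — either is fine for the argument).

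Then the limit-exchange is a standard $3\varepsilon$ / diagonal argument. Write
\begin{align*}
|\Theta_n^{(d)}-\Theta| \le |\Theta_n^{(d)}-\Theta_n| + |\Theta_n - \Theta|,
\end{align*}
and separately
\begin{align*}
|\Theta_n^{(d)}-\Theta| \le |\Theta_n^{(d)}-\Theta^{(d)}| + |\Theta^{(d)}-\Theta|.
\end{align*}
The term $|\Theta^{(d)}-\Theta|\to 0$ by \Cref{thm:depntkconverge}; the term $|\Theta_n^{(d)}-\Theta_n|\le C\kappa^d$ deterministically by the uniform contraction; the term $|\Theta_n^{(d)}-\Theta^{(d)}|\to 0$ in probability as $n\to\infty$ for fixed $d$. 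To get $\Theta_n\to\Theta$ in probability: fix $\varepsilon>0$, choose $d$ large so that $C\kappa^d<\varepsilon/3$ and $|\Theta^{(d)}-\Theta|<\varepsilon/3$, then for that fixed $d$ choose $n$ large so that $\P(|\Theta_n^{(d)}-\Theta^{(d)}|>\varepsilon/3)$ is small; combining, $\P(|\Theta_n-\Theta|>\varepsilon)$ is small, and since $\lim_{d}\lim_n\Theta_n^{(d)}=\Theta$ was already established, the two iterated limits agree in probability.

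I expect the main obstacle to be the \emph{uniform-in-$n$} contraction estimate for $\Phi_n$ — i.e. showing that the finite-width layer map contracts with a rate not degrading as $n\to\infty$, and that the $g^{(0)}=0$ initialization doesn't spoil this (one needs a short argument that after one or two layers the Gram entries enter a compact region on which the contraction bound holds, using that $\|g^{(h)}(x)\|$ concentrates near a fixed value on the sphere). A secondary technical point is making the $d$-dependence of the width-concentration constant explicit enough that the diagonal argument goes through; since we only need convergence in probability (not almost surely or with uniform rate in $d$), polynomial or even exponential growth of that constant in $d$ is harmless, so I would not try to optimize it.
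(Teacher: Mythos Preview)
Your three–step skeleton (uniform-in-$n$ geometric convergence in $d$; fixed-$d$ convergence in $n$; combine via a $3\varepsilon$/Moore--Osgood argument) is exactly the paper's strategy, and your identification of the uniform contraction as the crux is correct. The paper packages the final step as a probabilistic Moore--Osgood lemma rather than an explicit $3\varepsilon$ split, but that is cosmetic.

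Where your proposal diverges from the paper, and where it is imprecise, is the mechanism for the uniform contraction. You propose a map $\Phi_n$ on $2\times2$ Gram-type matrices and want to show it contracts. At finite width this map does not exist: the empirical inner products $\tfrac{1}{n}\langle g^{(h+1)}(x),g^{(h+1)}(y)\rangle$ are \emph{not} a function of the layer-$h$ Gram matrix alone, they depend on the full vectors $g^{(h)}(x),g^{(h)}(y)$ and on the realized $W$. The paper sidesteps this by working one level down, directly with the feature and backward vectors. Using the Gaussian operator-norm bound $\|W\|_{\mathrm{op}}\le 2\sqrt{n}+t$ w.h.p., the condition $\sigma_W^2\le 1/8$ gives $r:=\sqrt{2\sigma_W^2/n}\,\|W\|_{\mathrm{op}}<1$ w.h.p.\ (the extra $\sqrt{2}$ is the normalized-ReLU Lipschitz constant, which is where the $1/8$ comes from). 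Then $\|g^{(h)}\|\le r\|g^{(h-1)}\|+C$ is uniformly bounded, and the backward vectors satisfy $\|p^{(h)}\|\le r^{d-h}$, so the series $\sum_h\langle\partial_{W^{(h)}}f(x),\partial_{W^{(h)}}f(y)\rangle$ has a geometric tail with rate independent of $n$. Crucially this is a \emph{high-probability} statement on a single random matrix $W$ (weight-tying is essential here; with untied weights a union bound over $d$ layers would break), not a deterministic one as you wrote. Once you replace your $\Phi_n$ contraction by this norm argument on $g^{(h)},p^{(h)}$, the rest of your outline goes through unchanged; no burn-in is needed since the bounds hold from $h=0$.
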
 

\begin{hproof}
	We first use a well-established random matrix result to conclude that $\sigma_W^2<1/8$ guarantees us that $\sigma\circ\sqrt{\sigma_W^2/n}W$ is a contraction with high probability. Using this contraction property, we can then show that the empirical NTK $\Theta_n^{(d)}$ converges.  More importantly, it presents an ``uniform convergence'' property in $n$: a larger $d$ does not need a larger $n$ for the limit to converge. This is the crucial difference between this result and the results in untied-weight network. Intuitively, suppose contrarily our network has untied weights, to make our proof work we would need every layer's weight becomes a contraction. As $d$ increases, this clearly needs larger $n$ to use a union bound, which breaks if $d\to\infty$.
	
	Finally, we prove a probabilistic version of Moore-Osgood theorem to conclude that our limit exchange result holds.
\end{hproof}

\begin{remark}
	In \cref{thm:exchangelimit}, for a fixed depth $d$, $\Theta^{(d)}:=\lim_{n\to\infty}\Theta_n^{(d)}$ converges almost surely, hence we can view $\Theta:=\lim_{d\to\infty}\Theta^{(d)}$ as a constant. On the other hand, for a fixed $n$, $\Theta_n:=\lim_{d\to\infty}\Theta_n^{(d)}$ exists with probability at least $1-e^{-c\epsilon^2n}$ for some constant $c$, and $\epsilon\triangleq \frac{1-2\sqrt{2\sigma_W^2}}{\sqrt{2\sigma_W^2}}$. Formally, for any $\epsilon>0$, we have
	\[
		P\p{\abs{\Theta_n-\Theta}>\epsilon}<o(n),
	\]
	which converges in probability by definition. 
\end{remark}

\begin{remark}
	We remark that \cref{thm:exchangelimit} requires a more stringent $\sigma_W^2$ than \cref{lma:deqntkconverge}. This is indeed expected. For the actual DEQ to converge, one usually needs $I-W\succeq mI$ for some $m>0$. It seems that $\sigma_W^2\leq 1/2$ exactly reflects $I-W\succeq 0$, we leave this as an interesting future work. While \citet{hanin2019finite} also discussed about the relation between width and depth, and they concluded that the NTK may not even be deterministic if $d/n\gg 0$, our result does not contradict with theirs because their $n$ has to depend on $d$, but our proof decouples the dependency using uniform convergence thanks to weight-tying.
\end{remark}

%

\section{Case Study: Linear DEQ}\label{sec:lineardeq}

\Cref{thm:exchangelimit} shows a quite surprising result that we can safely exchange the limits, which is not at all straightforward to see. Consider the following \textsl{linear DEQ} case:
\begin{align}\label{eq:finitelineardeq}
\begin{split}
	&g_n^{(h)}(x) = \sqrt{\frac{\sigma_W^2}{n}}Wg_n^{(h-1)}(x)+\sqrt{\frac{\sigma_U^2}{n}}Ux,\ f^{(\infty)}_n(x)=v^Tg^{(\infty)}(x).
\end{split}
\end{align}

Assuming the iteration converges (this can be guaranteed with high probability picking a suitable $\sigma_W$). Equivalently, we can also write this network as 
\begin{align}\label{eq:lineardeq}
	f_n(x)=v^T\p{I-\sqrt{\frac{\sigma_W^2}{n}}W}^{-1}\sqrt{\frac{\sigma_U^2}{n}}Ux.
\end{align}

Following the same derivation in \cref{sec:deqntk}, one can easily see that $\dot\Sigma^{(h)}(x,y)=\sigma_W^2$ for all $h$, and show that 
$\lim_{d\to\infty}\lim_{n\to\infty} \Theta_n^{(d)}(x, y)=\frac{\sigma_v^2\sigma_U^2x^Ty}{(1-\sigma_W^2)^2}+\frac{\sigma_v^2\sigma_U^2x^Ty}{1-\sigma_W^2}.
$
 However, taking the infinite width limit of the network $f_n(x)$, it does not obey a Gaussian nature owing to the inverse of a shifted Gaussian matrix. It is not straightforward to see the limit exchange argument works. In this section, we aim to solve this linear DEQ case as a sanity check. In \cref{sec:simulation} we include numerical approximation that indicates the NTK-of DEQ-behaves as we expect.

\begin{restatable}{theorem}{linearexchange}\label{thm:linearexchange}
	Let $f_n(x)$ be defined as in \cref{eq:lineardeq} and $\Theta_n^{(d)}$ be the empirical NTK associated with the finite depth approximation of $f_n$ in \cref{eq:finitelineardeq}. Let $\sigma_W^2<1/4$ and $\sigma_W^2+\sigma_U^2=1$. We have
	\begin{align*}
		\lim_{d\to\infty}\lim_{n\to\infty}\Theta_n^{(d)}=\lim_{n\to\infty}\lim_{d\to\infty}\Theta_n^{(d)}
		=\frac{\sigma_v^2\sigma_U^2x^Ty}{(1-\sigma_W^2)^2}+\frac{\sigma_v^2\sigma_U^2x^Ty}{1-\sigma_W^2}
	\end{align*}
	with high probability.
\end{restatable}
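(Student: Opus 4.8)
### Proof proposal for Theorem~\ref{thm:linearexchange}

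The plan is to establish both iterated limits separately and then verify they coincide. The inner limit $\lim_{n\to\infty}\Theta_n^{(d)}$ for fixed $d$ follows from the general NTK machinery specialized to the linear activation: since $\sigma(x)=x$ gives $\dot\Sigma^{(h)}\equiv\sigma_W^2$ and $\Sigma^{(h)}(x,y)=\sigma_W^2\Sigma^{(h-1)}(x,y)+\sigma_U^2 x^\top y$, the recursion \eqref{eq:deqntk} telescopes into a finite geometric sum; taking $d\to\infty$ and using $\sigma_W^2<1/4<1$ gives the claimed closed form $\frac{\sigma_v^2\sigma_U^2 x^\top y}{(1-\sigma_W^2)^2}+\frac{\sigma_v^2\sigma_U^2 x^\top y}{1-\sigma_W^2}$. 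This is the ``easy'' direction and essentially a computation, so I would dispatch it quickly.

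For the reverse order $\lim_{n\to\infty}\lim_{d\to\infty}\Theta_n^{(d)}$, first I would argue that for fixed $n$ the depth limit $f_n^{(\infty)}$ exists: the condition $\sigma_W^2<1/4$ forces $\|\sqrt{\sigma_W^2/n}\,W\|_{\mathrm{op}}<1$ with high probability (by the standard bound $\|W\|_{\mathrm{op}}\le 2\sqrt n(1+o(1))$ for a Gaussian matrix, so $\sqrt{\sigma_W^2/n}\|W\|_{\mathrm{op}}\le 2\sqrt{\sigma_W^2}(1+o(1))<1$), which makes $I-\sqrt{\sigma_W^2/n}\,W$ invertible and the fixed-point iteration a contraction, yielding the closed form \eqref{eq:lineardeq}. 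On this event I would then write the empirical NTK $\Theta_n=\lim_d\Theta_n^{(d)}$ explicitly by differentiating $f_n(x)=\sigma_v v^\top (I-\sqrt{\sigma_W^2/n}W)^{-1}\sqrt{\sigma_U^2/n}Ux$ with respect to $v$, $W$, and $U$. Writing $M=(I-\sqrt{\sigma_W^2/n}\,W)^{-1}$, the three gradient contributions are: the $v$-term giving $\sigma_v^2\sigma_U^2/n \cdot (Ux)^\top M^\top M (Uy)$; the $U$-term giving $\sigma_v^2\sigma_U^2/n\cdot (v^\top M)(v^\top M)^\top x^\top y$ (up to constants); and the $W$-term giving a trace expression involving $M$ and the outer products. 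The goal is to show each of these concentrates, as $n\to\infty$, to the corresponding piece of the target expression.

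The main obstacle is the concentration of these resolvent-quadratic forms. Quantities like $\frac1n (Ux)^\top M^\top M (Ux)$, $\frac1n v^\top M M^\top v$, and the $W$-gradient trace term all involve the inverse of the shifted Gaussian matrix $I-\sqrt{\sigma_W^2/n}W$, which is precisely where the non-Gaussian behavior enters and where the naive ``the network is a GP'' reasoning fails. I would handle these with random matrix theory: the empirical spectral distribution of $\sqrt{\sigma_W^2/n}\,W$ converges to the circular law on the disk of radius $\sqrt{\sigma_W^2}$, and for the relevant quadratic forms I can use the expansion $M=\sum_{k\ge0}(\sigma_W^2/n)^{k/2}W^k$ (convergent on the high-probability event above) together with the fact that $\frac1n\E[\Tr(W^j (W^\top)^k)]$ vanishes unless $j=k$, in which case it equals the $k$-th Catalan-type moment — but here, because of the $1/n$ normalization and the structure of the sums, only the ``diagonal'' $j=k$ terms survive and they reassemble into $\sum_k \sigma_W^{2k}=1/(1-\sigma_W^2)$ for the order-one pieces and $\sum_k (k{+}1)\sigma_W^{2k}=1/(1-\sigma_W^2)^2$ for the pieces with a doubled resolvent. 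Concretely: $\frac1n\E\,v^\top M M^\top v\to \sum_{k\ge0}\sigma_W^{2k}=\frac{1}{1-\sigma_W^2}$ and $\frac1n\E(Ux)^\top M^\top M(Uy)\to x^\top y\sum_{k\ge0}\sigma_W^{2k}$, with variances $O(1/n)$ by a standard Gaussian Poincaré / Hanson–Wright argument on the event that $\|W\|_{\mathrm{op}}$ is controlled. Summing the $v$-, $U$-, and $W$-contributions then yields $\frac{\sigma_v^2\sigma_U^2 x^\top y}{(1-\sigma_W^2)^2}+\frac{\sigma_v^2\sigma_U^2 x^\top y}{1-\sigma_W^2}$, matching the other iterated limit. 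Finally, I would union-bound over the (finitely many, or countably many with summable failure probabilities) bad events to conclude that the equality holds with high probability, completing the sanity check. The delicate point throughout is that the operator norm of $W$ is $2\sqrt n$, not $\sqrt n$, so the contraction is only guaranteed for $\sigma_W^2<1/4$ rather than $\sigma_W^2<1$ — this is exactly the gap flagged in the remark after Theorem~\ref{thm:exchangelimit}, and it is what makes the truncated-Neumann-series estimates go through uniformly in $d$.
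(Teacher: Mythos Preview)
Your overall strategy is sound and matches the paper's decomposition into $W$-, $U$-, and $v$-gradient contributions, with the ``easy'' direction (first width, then depth) handled identically via the linear NTK recursion. The two approaches diverge at the central technical step: computing $\lim_{n\to\infty}\frac{1}{n}\tr(H^\top H)$ where $H=(I-\sqrt{\sigma_W^2/n}\,W)^{-1}$.

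The paper proceeds by identifying the weak limit $\mu$ of the empirical spectral distribution of $H^{-\top}H^{-1}$ via its Stieltjes transform, which satisfies a cubic equation borrowed from \cite{capitaine2016spectrum}; it then inverts the transform to obtain the density $d\mu(\lambda)$ explicitly and verifies $\int\lambda^{-1}d\mu(\lambda)=1/(1-\sigma_W^2)$ by a one-dimensional \emph{numerical} integration. Your route is more elementary and fully analytic: on the high-probability event $\|\sqrt{\sigma_W^2/n}\,W\|_{\rm op}<1$ you expand $H$ as a Neumann series and reduce the trace to the mixed $*$-moments $\frac{1}{n}\tr(\hat W^j(\hat W^\top)^k)$, which converge to $\delta_{jk}$ (the defining property of a circular element), so the sum collapses to $\sum_{k\ge 0}\sigma_W^{2k}=1/(1-\sigma_W^2)$ without any numerics. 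This buys you a cleaner, self-contained argument; what the paper's heavier machinery buys is the full limiting spectral law of $H^{-\top}H^{-1}$ (used elsewhere in their simulations), which your moment method does not directly yield.

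Two small points of care. First, your statement that ``$\frac{1}{n}\E[\Tr(W^j(W^\top)^k)]$ vanishes unless $j=k$'' is only true after the $n^{-(j+k)/2}$ normalization and in the limit; state it as convergence of $*$-moments of $W/\sqrt n$ to those of a circular element rather than as an exact vanishing. Second, you invoke the circular law (eigenvalue distribution in $\mathbb{C}$) but what you actually use is the $*$-distribution, which is a distinct (though equally standard) fact; make sure to cite the right result. Finally, justify the interchange of $\lim_{n\to\infty}$ with the infinite Neumann sum, which follows from dominated convergence once you have the uniform operator-norm bound --- this is the one place where the $\sigma_W^2<1/4$ hypothesis is genuinely load-bearing in your argument.
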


\begin{hproof}
	Let $H:=\p{I-\sqrt{\frac{\sigma_W^2}{n}}W}^{-1}$. Such $H$ is well-defined with high probability if $\sigma_W^2<1/4$. A straightforward derivation gives:
	\begin{align}\label{eq:innerproductconvergence}
	\begin{split}
		&\lim_{d\to\infty}\ip{\frac{\partial f_n^{(d)}(x)}{\partial W}}{\frac{\partial f_n^{(d)}(y)}{\partial W}}
		=\frac{\sigma_U^2\sigma_v^2}{n}\frac{\sigma_W^2}{n}\ip{Hv(HUx)^T}{Hv(HUx)^T}\\
		&=\frac{\sigma_W^2\sigma_U^2}{n}\ip{HUx}{HUx}\frac{\sigma_v^2}{n}\ip{Hv}{Hv}
		\xrightarrow{p}\sigma_U^2\sigma_W^2\sigma_v^2 x^Ty \p{\frac{1}{n}\tr\p{H^TH}}^2\\
		&\xrightarrow{} \sigma_U^2\sigma_W^2\sigma_v^2 x^Ty\p{\int\frac{1}{\lambda}d\mu(\lambda)}^2,
	\end{split}
	\end{align}
	where the first convergence happens with high probability \citep{arora2019exact}, and the second convergence holds for almost every realization of a sequence of $W$. This follows from the weak convergence of probability measure $\mu_n\xrightarrow{d}\mu$ a.s. and Portmanteau lemma, where $\mu_n$ is the empirical distribution of the eigenvalue of the matrix $\p{I-\sqrt{\frac{\sigma_W^2}{n}}W}^{T}\p{I-\sqrt{\frac{\sigma_W^2}{n}}W}$. More precisely, $\mu_n=\frac{1}{n} \sum_{i=1}^{n} \delta_{\lambda_{i}}$, $\delta_{\lambda_i}$ is the delta measure at the $i$th eigenvalue $\lambda_i$.
	
	Next, we show that $\int\frac{1}{\lambda}d\mu(\lambda)=\frac{1}{1-\sigma_W^2}$. From \citet{capitaine2016spectrum}, we learn that the Stieltjes transform $g$ of $\mu$ is a root to the following cubic equation:
	\begin{align*}
		\text{For }z\in\mathbb C^+: 
		 g_\mu(z)^{-1} = \left(1- \sigma_W^{2} g_{\mu}(z)\right) z-\frac{1}{1- \sigma_W^{2} g_{\mu}(z)}.
	\end{align*}
	We then apply the inverse formula of Stieltjes transformation to derive the density
	\begin{align}\label{eq:eigendist}
		d\mu(\lambda)=\frac{1}{\pi}\lim_{b\to 0^+}\operatorname{Im}g_\mu(\lambda+ib).
	\end{align}
	This now involves a one-dimensional integration, which can be computed numerically and shown to be identical to the desired quantity.  Similarly, we can compute that 
	$$
		\lim_{d\to\infty}\ip{\frac{\partial f_n^{(d)}(x)}{\partial U}}{\frac{\partial f_n^{(d)}(y)}{\partial U}}\xrightarrow{p}  \frac{\sigma_v^2\sigma_U^2x^Ty}{1-\sigma_W^2}, \ 
		\lim_{d\to\infty}\ip{\frac{\partial f_n^{(d)}(x)}{\partial v}}{\frac{\partial f_n^{(d)}(y)}{\partial v}}\xrightarrow{p}  \frac{\sigma_v^2\sigma_U^2x^Ty}{1-\sigma_W^2}.
	$$
	Summing the three relevant terms and use the fact that $\sigma_U^2+\sigma_W^2=1$, we get the claimed result.
\end{hproof}


%
%

\section{Simulations}\label{sec:simulation}
\begin{figure*}[h]
	\centering
	\includegraphics[scale=0.35]{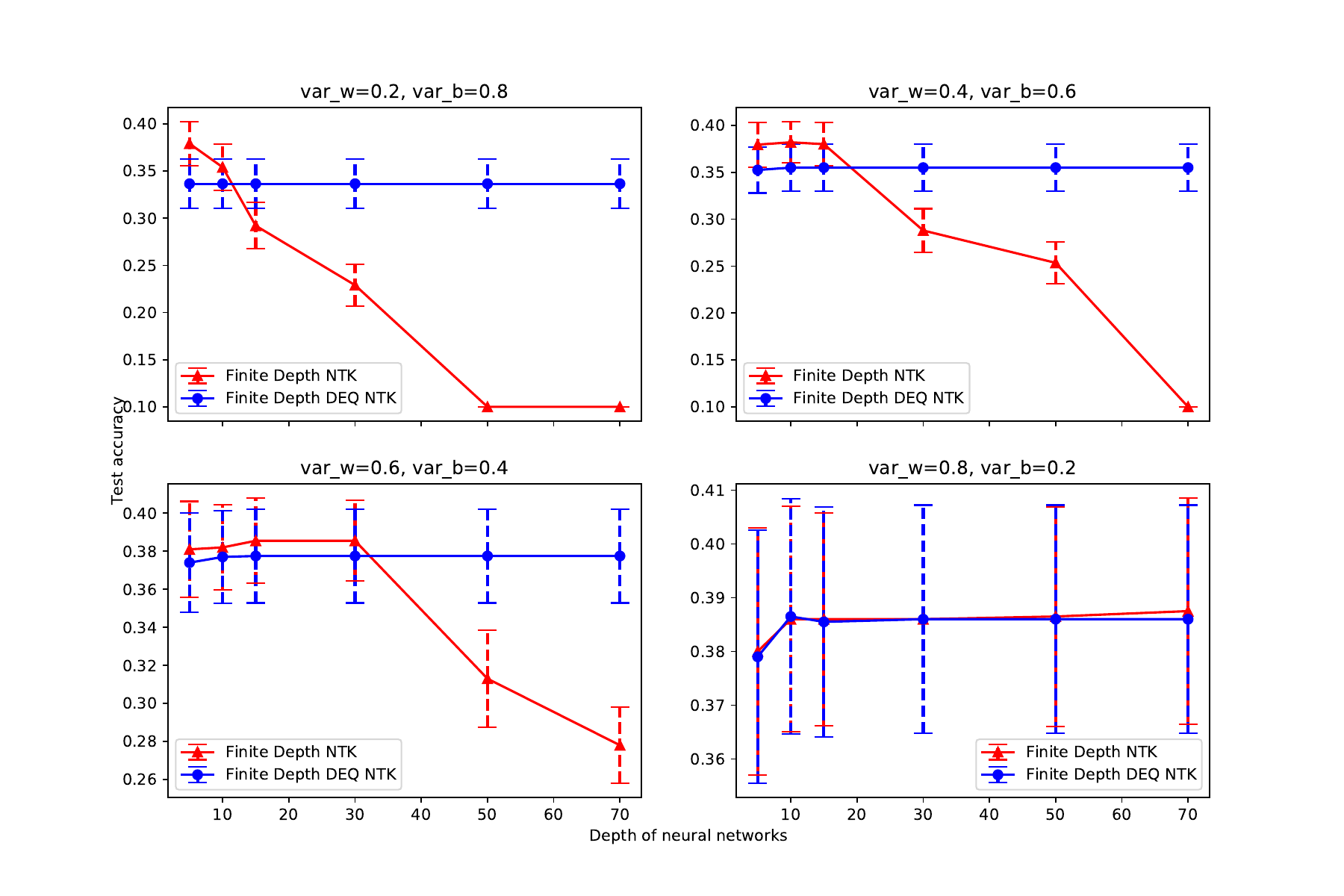}
	\vskip 0.1in
	\caption{Finite depth NTK vs. finite depth iteration of NTK-of-DEQ. In all experiments, the NTK is initialized with $\sigma_W^2$ and $\sigma_b^2$ in the title. For NTK-of-DEQ we set $\sigma_U^2=\sigma_b^2-0.1$ in the title, and $\sigma_b^2=0.1$. All models are trained on 1000 CIFAR-10 data and tested on 100 test data for $20$ random draws. The error bar represents the $95\%$ confidence interval (CI). As expected, as the depth increases, the performance of NTKs drop, eventually their $95\%$ CI becomes a singleton, yet the performance of DEQs stabilize. Also note with larger $\sigma_W^2$, the freezing of NTK takes more depths to happen.}\label{fig:finitedepth}
	\vskip 0.1in
\end{figure*}

In this section, we perform numerical simulations on both synthetic data and real-world datasets including MNIST and CIFAR-10 to demonstrate our arguments. In particular, we show that 
\begin{enumerate*}[label=(\alph*), series = tobecont, itemjoin = \ ]
	\item The NTK-of-DEQ and DEQ-of-NTK coincides, for both linear and non-linear cases,
	\item A vanilla NTK of FCNN is degenerate while the NTK-of-DEQ escapes the freeze vs. chaos scheme,
	\item The NTK-of-DEQ delivers reasonable performances on real-world datasets as a further evidence to its nondegeneracy.
\end{enumerate*}

\subsection{NTK-of-DEQ vs DEQ-of-NTK}
Recall in \cref{sec:lineardeq}, the distribution $\mu$ in \cref{eq:eigendist} is that of the eigenvalues of $H^{-T}H^{-1}\triangleq(I-\sqrt{\sigma_W^2/n}W)^{T}(I-\sqrt{\sigma_W^2/n}W)$ as $n\to\infty$. The exact limiting eigenvalue distribution $\mu$ when $\sigma_W^2=0.25, 0.5, 0.75$ is shown in \cref{fig:dist_randmat}. Keep in mind that $d\mu$ depicts the probability density of how large an eigenvalue of our random matrix can be. 

 For $\sigma_W^2=0.25, 0.5, 0,75$ we include an empirical eigenvalue distribution of $H^{-T}H^{-1}\in\R^{n\times n}$ for $n=1000$ in \cref{fig:empirical_dist_randmat}. One can see that the empirical density is sufficiently close to the limiting distribution for large enough $n$, verifying the computation in \cref{eq:eigendist}. 
 
 We calculated the empirical trace of $\frac{1}{n}\tr{H^TH}$ where $H$ is of size $5000\times 5000$. This expression is the key element for \cref{eq:innerproductconvergence}.  The simulation samples $H$ i.i.d $10$ times and the results are presented in \cref{fig:emp_vs_theory_trace}. We can see that the variance of the estimator $1/(1-\sigma_W^2)$ is negligible for small $\sigma_W^2$. Note that in the proof  we require that $\norm{\sqrt{\sigma_W^2/n}W}<1$ with high probability, which holds when $\sigma_W^2<1/4$. However, empirically the convergence of empirical trace holds for much larger $\sigma_W^2$ as well. 

We also test the difference between the empirical NTK-of-DEQ $\Theta_{n}$ and the DEQ-of-NTK $\Theta$ numerically, for both linear DEQ and nonlinear DEQ with normalized ReLU. We initialize both networks at variable width, with $\sigma_v^2=2$, $\sigma_W^2=1/8$, and $\sigma_U^2=7/8$. $\Theta_n$ is calculated by taking the inner product between the exact gradients\footnote{The gradient is taken via implicit function theorem, see details in \citet{bai2019deep}.} of a finite-width DEQ on two inputs, and $\Theta$ is computed using the DEQ-of-NTK formula in \cref{thm:depntkconverge}. A pair of input $(x,y)$ is randomly sampled and fixed throughout the simulation. For each width $n$, $10$ trials are run, and we draw the mean of $\log\frac{|\Theta-\Theta_n|}{\Theta}$ in \cref{fig:deq_approx}. The convergence of the relative residue indicates that the NTK-of-DEQ and the DEQ-of-NTK coincide as proven.

\begin{figure}

 \captionsetup[subfigure]{width=0.9\linewidth}
	\begin{subfigure}[t]{0.33\textwidth}
		\centering
		\includegraphics[width=\textwidth]{./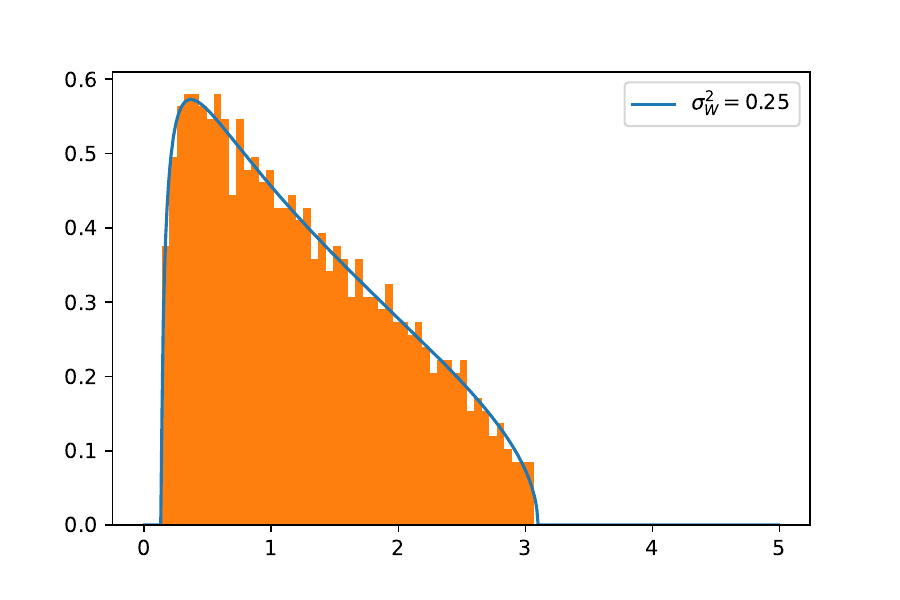}
	\end{subfigure}
	\begin{subfigure}[t]{0.33\textwidth}
		\centering
		\includegraphics[width=\textwidth]{./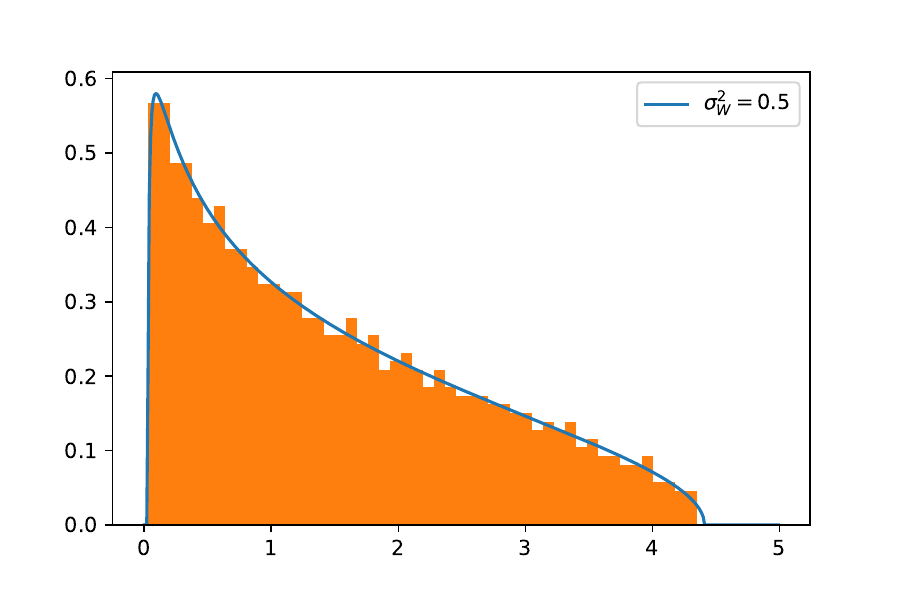}
	\end{subfigure}
	\begin{subfigure}[t]{0.33\textwidth}
		\centering
		\includegraphics[width=\textwidth]{./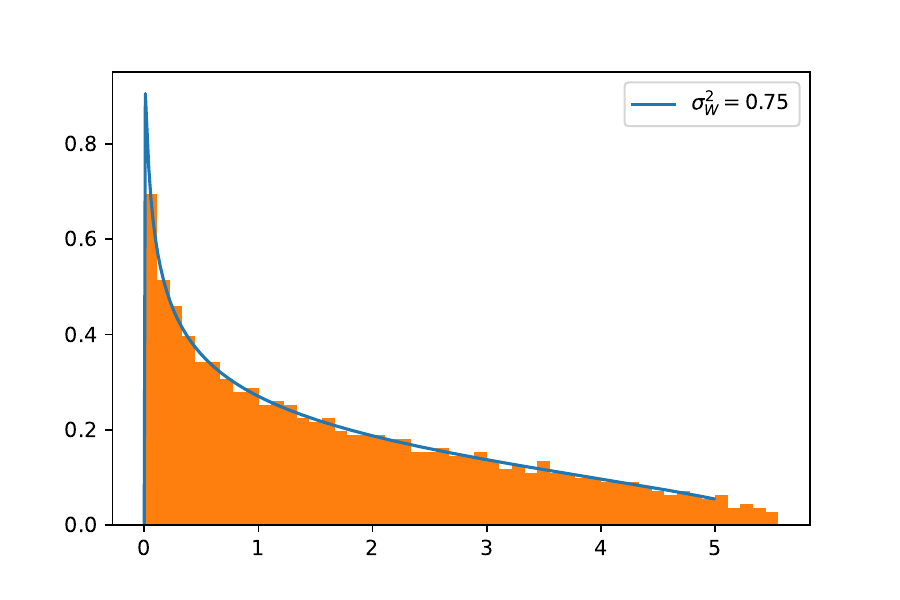}
	\end{subfigure}
	\caption{The empirical eigenvalue distribution of an instance of a $1000\times 1000$ random matrix $(I-\sqrt{\sigma_W^2/n}W)^{T}(I-\sqrt{\sigma_W^2/n}W)$ with $\sigma_W^2=0.25, 0.5, 0.75$, respectively.}
	 \label{fig:empirical_dist_randmat}

\end{figure}

\begin{figure}
 \captionsetup[subfigure]{width=0.9\linewidth}
	\centering
	\begin{subfigure}[t]{0.35\textwidth}
		\centering
		\includegraphics[width=\textwidth]{./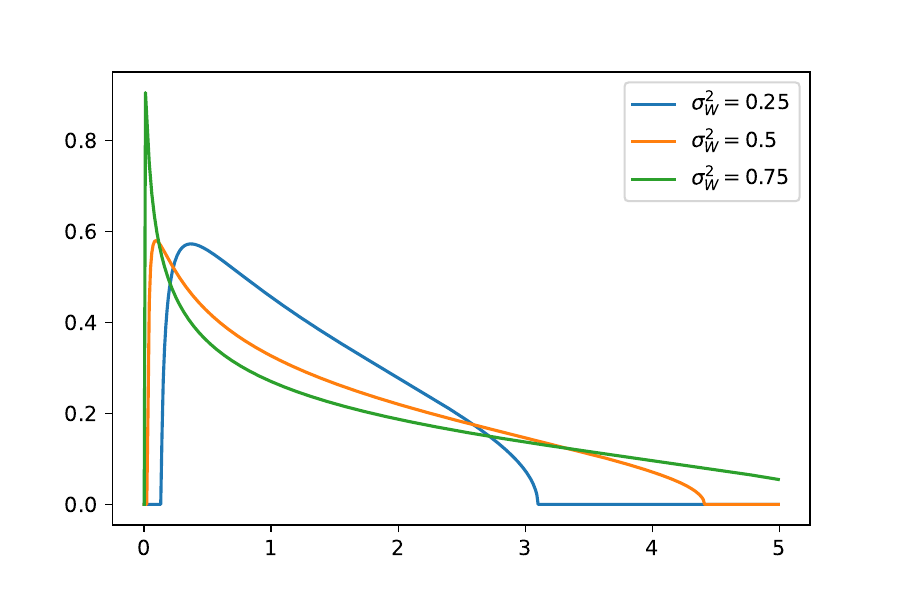}
		\caption{The limiting eigenvalue distribution of $(I-\sqrt{\sigma_W^2/n}W)^{T}(I-\sqrt{\sigma_W^2/n}W)$.}
		\label{fig:dist_randmat}
	\end{subfigure}
	\begin{subfigure}[t]{0.35\textwidth}
		\centering
		\includegraphics[width=\textwidth]{./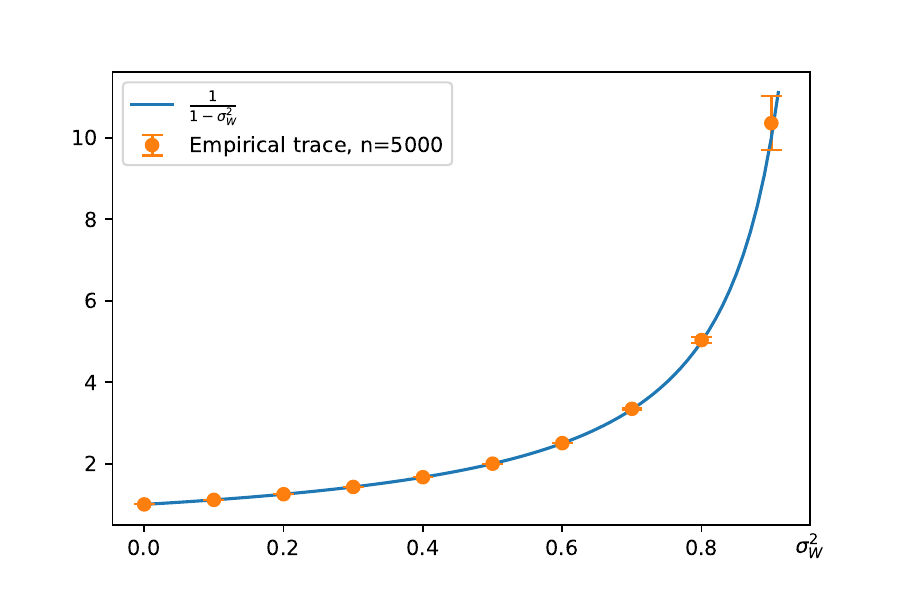}
		\caption{The empirical and expected trace. The simulation is run $10$ times the error bar denotes the standard deviation.}
		\label{fig:emp_vs_theory_trace}
	\end{subfigure}
	\caption{Demonstrations of the limiting eigenvalue distribution of $H^{-T}H^{-1}$ and its approximation.}
\end{figure}

\begin{figure}[h]
	\centering
	\begin{subfigure}[t]{0.35\linewidth}
		\centering
		\includegraphics[width=\textwidth]{./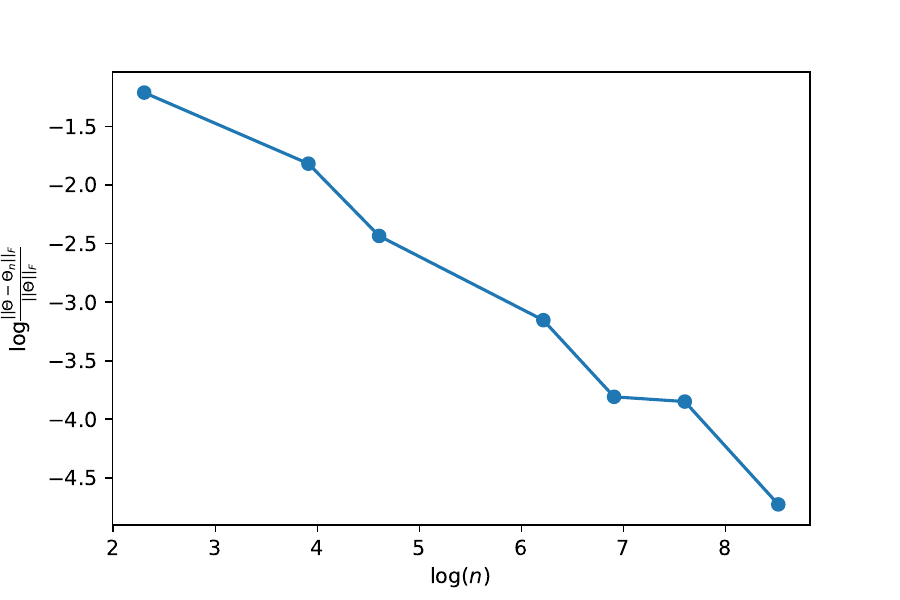}
	\end{subfigure}
	\begin{subfigure}[t]{0.35\linewidth}
		\centering
		\includegraphics[width=\textwidth]{./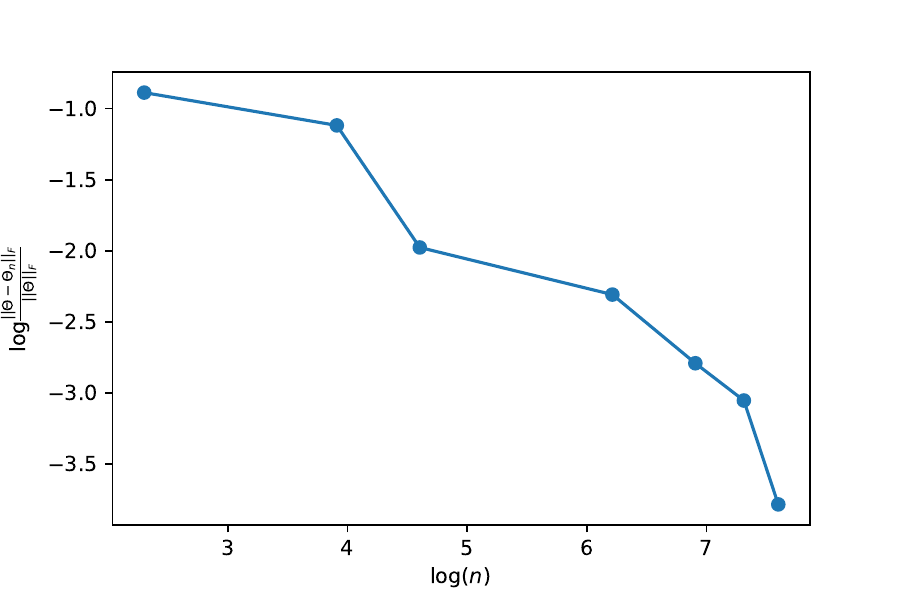}
	\end{subfigure}
 	\caption{The deviation between the empirical NTK-of-DEQ and the exact DEQ-of-NTK on a log scale. The result of linear DEQ is on the left and the result of nonlinear DEQ is on the right. We randomly sample one pair of $(x,y)$ on the unit sphere, and for each width $n$, $10$ trials are done with freshly sampled network weights, then we record the mean of relative residues in each setting. The convergence shows that NTK-of-DEQ and DEQ-of-NTK coincide.}
	\label{fig:deq_approx}
\end{figure}

\subsection{Simulations on CIFAR-10 and MNIST}

\textbf{Hyperparameter sensitivity.} We have three tunable parameters: $\sigma_W^2, \sigma_U^2, \sigma_b^2$. We try three random combinations listed in \cref{table:cifar}. As the results suggest, the performances of NTK-of-DEQ are insensitive to these parameters. This observation aligns with the description in \citet{lee2020finite}.

\begin{figure}[!t]
\begin{minipage}{0.46\textwidth}
\small
\makeatletter
\renewcommand{\@captype}{table}
\caption{Performance of NTK-of-DEQ on MNIST and CIFAR-10 dataset.}\label{table:cifar} 

	\centering
	\begin{tabular}{ ccc } 
	\midrule
	\textbf{Parameters} & \textbf{Dataset} & \textbf{Acc.} \\
	\midrule
	 $\sigma_W^2=\sigma_U^2=0.25, \sigma_b^2=0.5$ &CIFAR-10& $59.08\%$ \\ \addlinespace[1pt]
	$\sigma_W^2=0.6, \sigma_U^2=0.4, \sigma_b^2=0$ &CIFAR-10& 59.77\% \\ \addlinespace[1pt]
	$\sigma_W^2=0.8, \sigma_U^2=0.2, \sigma_b^2=0$ &CIFAR-10& $59.43\%$ \\ \addlinespace[1pt]
	$\sigma_W^2=0.6, \sigma_U^2=0.4$ &MNIST & 98.6\% \\ \addlinespace[1pt]
	
	\hline
	\end{tabular}
\makeatother
\end{minipage}
\hspace{0.07\textwidth}
\begin{minipage}{0.45\textwidth}
	\centering
	\includegraphics[width=0.8\linewidth]{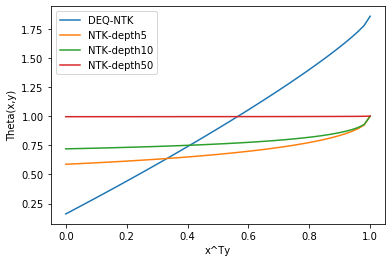}
	\caption{Relation between $\Theta(x,y)$ and $x^Ty$. }\label{fig:thetavsdotprod}
\end{minipage}
\end{figure}

\textbf{Training details and results.}
For NTK-of-DEQ, following the theory, we normalize the dataset such that each data point has unit length.
The fixed point $\Sigma^*(x,y)$ is solved by using the modified Powell hybrid method \citep{powell1970hybrid}. Notice these root finding problems are one-dimensional, hence can be quickly solved. 


After obtaining the NTK matrix, we apply kernel regressions (without regularization unless stated otherwise). For any label $y\in\{1,\ldots, n\}$, denote its one-hot encoding by $\mathbf{e}_y$. Let $\mathbf{1}\in\R^n$ be an all-$1$ vector, we train on the new encoding $-0.1\cdot \mathbf{1}+\mathbf{e}_y$. That is, we change the ``$1$'' to $0.9$, and the ``$0$'' to $-0.1$, as suggested by \cite{novak2018bayesian}. The results are listed in \cref{table:cifar}. These results prove that the NTK-of-DEQ is indeed non-degenerate.

%




On a smaller dataset with 1000 training data and 100 test data from CIFAR-10, we evaluate the performance of NTK and the finite depth iteration of NTK-of-DEQ, as depth increases. See \cref{fig:finitedepth}. When the depth increases, the performance of finite depth NTK gradually drops, eventually to 0.1 with 0 standard deviation. Also with larger $\sigma_W^2$, the degeneration of NTK occurs slower. This shows that large $\sigma_W^2$ preserves information from previous layers. \cref{fig:thetavsdotprod} also shows that the vanilla NTK becomes independent of the input inner product $x^Ty$ as the depth increases. As proven in \citet{jacot2019freeze}, the NTK will always ``freeze'' using the sets of parameters in \cref{fig:finitedepth}. In this scenario, the NTK Gram matrix becomes linearly independent as the depth increases, and its kernel regression does not have a unique solution. To circumvent this unsolvability, we add a regularization term $r\propto \frac{\epsilon \Theta (x,x)}{n}$, where $n$ is the size of the training data.

\section{Conclusion}\label{sec:conclusion}
We derive NTKs for DEQ models, and show that they can be computed efficiently via root-finding based on a limit exchanging argument. This argument is proven theoretically for non-linear DEQs and an extra sanity check is done on linear DEQs, exploiting random matrix theory. Numerical simulations are performed to demonstrate that the limit exchanging phenomenon holds for both linear and non-linear NTK-of-DEQs. Our analysis also shows that one can avoid the freeze and chaos phenomenon in infinitely deep NTKs by using input injection. Additions experiments are conducted to show that NTK-of-DEQs are non-degenerate on real-world datasets, while finite depth NTKs gradually degenerate as their depth increases. 

\bibliography{main_deqntk.bib}
\bibliographystyle{iclr2023_conference}

\appendix
\onecolumn
\section{Formal derivation of weight-tied network}
In this section we formally derive the NTK of a DEQ (weight-tied) model, and show that they converge to the same limit as derived in \cref{sec:deqntk}. The argument is nearly identical to that of \citet{alemohammad2020recurrent}, which heavily depends on the \textsc{Nester$\top$} program \citep{yang2020tensor}. We will first give a brief introduction, and then adapt to our setting.

\begin{definition}
	\textsc{Nester$\top$} program is a program (as in type system) of which the variables take three-types: $\tA$-vars, $\tG$-vars, and $\tH$-vars. Any variables are generated by one of the rules in $\matmul$ (matrix multiplication), $\nonlin$ (nonlinearity), $\lincomb$ (linear combination), or $\trsp$ (matrix transpose). We also sometimes explicitly express the dimensionality of a variable in the following way:
	\begin{itemize}
		\item If $x\in\R^n$, and is of type $\tG, \tH$, we write $x:\tG(n)$ or $x:\tH(n)$.
		\item If $A\in\R^{n\times m}$, we write $A:\tA(n, m)$.
	\end{itemize}
	
	The program goes as following:
	\paragraph{Input} A set of $\tG$-vars and $\tA$-vars.
	\paragraph{Body} Any variable is introduced by the following rules:
	\begin{itemize}
		\item $\trsp$. If $A:\tA(n,m)$, then $A^\top:\tA(m,n)$.
		\item $\matmul$. If $A:\tA(n,m)$ and $x:\tH(m)$, then $Ax:\tG(n)$.
		\item $\lincomb$. If $g^1,\ldots, g^k:\tG(n)$ and $a^1,\ldots, a^k\in\R$, then $\sum_{i=1}^ka^ig^i: \tG(n)$.
		\item $\nonlin$. If $x^1,\ldots, x^k:\tG(n)$, and $\phi:\R^k\to\R$ is a coordinate-wise nonlinear function, then $\phi(x^1,\ldots, x^k):\tH(n)$.
	\end{itemize}
	
	\paragraph{Output} The program outputs a scalar of the form
	\[
		\frac{1}{n} \sum_{\alpha=1}^n \psi\left(h_\alpha^1, \ldots, h_\alpha^k\right)
	\]
	for $h^1\ldots h^k:\tH(n)$.
\end{definition}

For example, a depth-$d$ approximation to a DEQ model is provided in \cref{alg:deqnester}. For simplicity, we left out the scaling $\sigma_W^2/\sqrt{n}$ (as was done in \cite{yang2020tensor}).
\begin{algorithm}
\caption{\textbf{\textsc{Nester$\top$} program} Depth-$d$ approximation to a DEQ model }\label{alg:deqnester}
\begin{algorithmic}
\Require $Ux, Uy:\tG(n), W:\tA(n,n), b:\tG(n), v:\tG(n)$. Polynomially-bounded coordinate-wise nonlinear function $\phi$.
\For {$h=1,\ldots, d$}
	\For {$z\in\{x, y\}$}
		\State 
			$
				f^{(h)}(z) = Wg^{(h-1)}(z)+Uz+b : \tG(n).
			$
		\State $g^{(h)}(z)=\phi(f^{(h)}(z)):\tH(n)$.
		\State $//$ The network outputs $f^{(d+1)}(z):=\frac{v^\top g^{(d)}(z)}{n}$, but we don't express this in the program.
		\State $//$ Backprop, for varible $u$, let $du:=\sqrt{n}\nabla_uf^{(d+1)}(z)$.
		\State $dg^{(d)}(z) = v:\tG(n)$.
		\State $df^{(d)}(z) = \phi'(f^{(d)}(z))\odot dg^{(d)}(z):\tH(n)$. \Comment{We use $\odot$ for Hadamard product.}
		\State $dg^{(h)}(z) = W^\top df^{(h+1)}(z):\tG(n)$.
		\State $df^{(h)}(z) = \phi'(f^{(h)}(z))\odot dg^{(h)}(z):\tH(n)$.
	\EndFor
\EndFor

\end{algorithmic}
\end{algorithm}

One can express many neural network architectures into a \textsc{Nester$\top$} program, but not all. The required regularity condition is the so-called \textit{BP-like}:
\begin{definition}[BP-like]
\label{def:bplike}
	A \textsc{Nester$\top$} program is $\textit{BP-like}$ if there exists a non-empty set of input $\tG(n)$-vars $v^1,\ldots, v^k$ s.t:
	\begin{enumerate}
		\item If $W^\top z$ is used in the program for some $z:\tH(n)$, and $W:\tA(n,m)$ is an input $\tA$-var, then $z$ must be an odd function of $v^1,\ldots, v^k$. That is,
		\[
			z\left(-v^1, \ldots,-v^k \text {, all other $\tG$-vars }\right)=-z\left(v^1, \ldots, v^k \text {, all other $\tG$-vars }\right).
		\]
		\item If $Wz$ is used in the program for some $z:\tH(m)$, and $W:\tA(n,m)$ is an input $\tA$-var, then $z$ cannot depend on any of $v^1,\ldots v^k$.
		\item $v^1,\ldots, v^k$ are sampled with zero mean and independently from all other $\tG$-vars.
	\end{enumerate}
\end{definition}

\begin{definition}[Polynomially-bounded]
	We say a function $f:\R^k\to\R$ is polynomially-bounded if $|\phi(x)|\leq C\|x\|^p+c$ for some $c, C, p>0$, for all $x\in\R^k$. Note that ReLU and inner product are polynoimially-bounded.
\end{definition}

Recall that the simple gradient independence assumption (GIA) check we give in \cref{sec:deqntk}:
\begin{condition}[Simple GIA check]
	\textit{Gradient independence assumption} is a heuristic that for any matrix $W$, we assume $W^\top$ used in  backprop is independet from $W$ used in the forward pass. We can regard this assumption holds in the NTK computation if the following simple check holds: 
	the output layer is sampled independently with zero mean from all other parameters and it not used anywhere else in the interior of the network, that is, if the output of the network is $v^\top x$, then $v$ is independent of $x$.
\end{condition}
Apparently our DEQ formulation satisfy the simple GIA check, notice that by formulation, the second and third condition in \cref{def:bplike} are trivially satisfied. Also since $v$ is the last layer weight, any $\tG$-var of the form $W^\top z$ only shows up in the backpropogation, and is linear (thus odd) in $v$ as well. Hence the first condition is also satisified. So any network structure that satisfies the simple GIA check is automatically BP-like.

\begin{setup}\label{setup:bplike}
	For \textsc{Nester$\top$} program, we assume that each entry in $W:\tA(n,m)$ is sampled from $\cN(0, \sigma_W^2/n)$, and any input $\tG$-vars $x\sim\cN(\mu^{in}, \Sigma^{in})$. We remark that this does not contradict with the parameterization that we mentioned in the main text where the entries of input $\tA$-vars $W, U$ are standard Gaussians. One just needs to properly scale their variables.
\end{setup}

\begin{theorem}[BP-like \textsc{Nester$\top$} program Master theorem]
	Fix any $\textit{BP-like}$ \textsc{Nester$\top$} program that satisfies \cref{setup:bplike}, and all its nonlinearities are polynomially-bounded. If $g^1,\ldots, g^M$ are all $\tG$-vars in the program, then for any polynomially-bounded $\psi:\R^M\to \R$, as $n\to\infty$, we have
	\[
	\frac{1}{n} \sum_{\alpha=1}^n \psi\left(g_\alpha^1, \ldots, g_\alpha^M\right) \stackrel{\text { a.s. }}{\longrightarrow} \underset{Z \sim \mathcal{N}(\mu, \Sigma)}{\mathbb{E}} \psi(Z)=\underset{Z \sim \mathcal{N}(\mu, \Sigma)}{\mathbb{E}} \psi\left(Z^{g^1}, \ldots, Z^{g^M}\right),
	\]
	where $Z=\{Z^{g^1}, \ldots, Z^{g^M}\}\in\R^M$, $\mu=\{\mu(g^i)\}_{i\in[M]}\in\R^M$, $\Sigma=\{\Sigma(g^i, g^j)\}_{i,j=1}^M\in\R^{M\times M}$ are given by
	\begin{equation}
	\begin{aligned}
	\mu(g) &=\left\{\begin{array}{ll}
	\mu^{\text {in }}(g) & \text { if } g \text { is input, } \\
	\sum_{i=1}^k a^i\mu(g^i) & \text{if $g=\sum_{i=1}^k a^i g^i$}\\
	0 & \text { otherwise } 
	\end{array}\right.\\
	\Sigma(g, \bar{g}) &= \begin{cases}\Sigma^{\text {in }}\left(g, g^{\prime}\right) & \text { if } g, g^{\prime} \text { are inputs } \\
	\sum_{i=1}^k a^i\Sigma(g^i, \bar g) & \text{if $g=\sum_{i=1}^k a^i g^i$}\\
	\sum_{i=1}^k a^i\Sigma(g, \bar g^i) & \text{if $\bar g=\sum_{i=1}^k a^i \bar g^i$}\\
	\sigma_W^2 \mathbb{E}_Z \phi(Z) \bar{\phi}(Z) & \text { if } g=W h, \bar{g}=W \bar{h}, \\
	0 & \text { otherwise. }\end{cases}
	\end{aligned}
	\end{equation}
\end{theorem}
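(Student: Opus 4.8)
The plan is to recognize this as the Master theorem for \textsc{Nester$\top$} programs established in \citet{yang2020tensor}, and to prove it by structural induction on the program, with the \emph{Gaussian conditioning} technique as the engine. The right induction hypothesis is a strengthening of the statement: for every prefix of the program, if $g^1,\dots,g^m$ are the \tG-vars produced so far, then (i) $\tfrac1n\sum_{\alpha}\psi(g^1_\alpha,\dots,g^m_\alpha)\xrightarrow{\text{a.s.}}\E_{Z\sim\cN(\mu,\Sigma)}\psi(Z)$ for every polynomially-bounded $\psi$, with $\mu,\Sigma$ built by the stated recursion; and (ii) a uniform moment bound $\sup_n\tfrac1n\sum_\alpha|g^i_\alpha|^p<\infty$ a.s.\ for every $p$, together with control of the Gram matrices $\tfrac1n H^\top H$ of the \tH-vars (this is what keeps the later conditioning step nondegenerate). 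Taking $\psi$ to be the program's output functional then yields the theorem.

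\textbf{Base case and easy steps.} For an input \tG-var $x$, \cref{setup:bplike} makes its coordinates i.i.d.\ $\cN(\mu^{in},\Sigma^{in})$, so (i) is the law of large numbers and (ii) is immediate. A \textsc{LinComb} $g=\sum_i a^ig^i$ is handled by linearity: a finite linear combination of vectors whose joint empirical law tends to a Gaussian again has empirical law tending to the Gaussian with the recursively defined mean and covariance, and moments are controlled by the triangle inequality. A \textsc{NonLin} output $\phi(x^1,\dots,x^k)$ is obtained coordinatewise from \tG-vars, so any polynomially-bounded test function of it is a polynomially-bounded test function of the $x^j$; convergence then follows from the induction hypothesis applied to $(x^1,\dots,x^k)$, and polynomial boundedness together with the moment bound supplies the uniform integrability needed to pass from weak convergence of the empirical measure to convergence of the average. \textsc{Trsp} merely relabels an input \tA-var.

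\textbf{The crux: \textsc{MatMul}.} Suppose $g=Wh$ with $W:\tA(n,n)$ an input matrix and $h:\tH(n)$, and let $\mathcal F$ be the $\sigma$-algebra generated by everything computed strictly before this step. Conditioned on $\mathcal F$, the only information about $W$ is the linear constraints from its earlier appearances: $WH_{\mathrm{fwd}}=G_{\mathrm{fwd}}$ from previous forward products $Wh_j$, and $W^\top H_{\mathrm{bwd}}=G_{\mathrm{bwd}}$ from previous backward products $W^\top h'_j$. The Gaussian conditioning lemma gives $W\mid\mathcal F\stackrel{d}{=}\widetilde W+P_{\mathrm{fwd}}+P_{\mathrm{bwd}}$, where $\widetilde W$ is a fresh i.i.d.\ $\cN(0,\sigma_W^2/n)$ matrix independent of $\mathcal F$, $P_{\mathrm{fwd}}$ realizes $WH_{\mathrm{fwd}}=G_{\mathrm{fwd}}$, and $P_{\mathrm{bwd}}$ realizes $W^\top H_{\mathrm{bwd}}=G_{\mathrm{bwd}}$; hence $g=\widetilde Wh+P_{\mathrm{fwd}}h+P_{\mathrm{bwd}}h$. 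Three claims finish the step: (a) $\widetilde Wh$ is, conditionally on $\mathcal F$, a centered Gaussian vector with coordinate variance $\tfrac{\sigma_W^2}{n}\|h\|^2\to\sigma_W^2\,\E[\phi(Z)^2]$ by the induction hypothesis, asymptotically independent of the earlier \tG-vars; (b) $P_{\mathrm{fwd}}h$ supplies exactly the cross-covariances $\sigma_W^2\,\E[\phi(Z)\bar\phi(Z)]$ with each earlier $\bar g=W\bar h$, since its coefficients are the limiting Gram entries $\tfrac1n h_j^\top h\to\E[\phi(Z)\bar\phi(Z)]$; and (c) $P_{\mathrm{bwd}}h$ is asymptotically negligible, which is where \cref{def:bplike} enters essentially — this correction is governed by the inner products $\tfrac1n(h'_j)^\top h$, and since $h$ does not depend on $v^1,\dots,v^k$ while each $h'_j$ arising in a backward constraint is an odd function of them, the limiting expectation vanishes by symmetry. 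Collecting (a)--(c) (and reading off $\mu$ from the \textsc{LinComb} rule) reproduces the stated recursion for $g$; the moment bound for $g$ follows from that for $h$ plus standard Gaussian-matrix tail bounds, and one updates the Gram-matrix control.

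\textbf{Main obstacle.} The only substantial difficulty is the \textsc{MatMul} step, and within it the vanishing of $P_{\mathrm{bwd}}h$: this is the rigorous content of the heuristic ``gradient independence assumption'', and it genuinely requires the BP-like hypothesis, for without the oddness/independence structure in $v^1,\dots,v^k$ the forward copy of $W$ and the backward copy $W^\top$ remain asymptotically correlated and would inject spurious covariance. The surrounding chores — tracking the conditional law of $W$ with $\operatorname{col}(H_{\mathrm{fwd}})$ and $\operatorname{col}(H_{\mathrm{bwd}})$ kept separate and nondegenerate as $n\to\infty$, and propagating the uniform moment bounds — are routine once the induction is set up. Applying the proven convergence to $\psi$ equal to the output functional, with one more use of uniform integrability, gives the Master theorem. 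Since this argument is carried out in full in \citet{yang2020tensor} and adapted to the weight-tied/recurrent setting in \citet{alemohammad2020recurrent}, what remains specific to us is only to check that the DEQ program of \cref{alg:deqnester} is BP-like and meets \cref{setup:bplike} — which was verified just above the statement — and then invoke it.
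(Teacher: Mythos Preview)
The paper does not prove this theorem at all: it is stated as a quotation of the Master theorem from \citet{yang2020tensor} and immediately applied to the DEQ program of \cref{alg:deqnester}. Your proposal goes further than the paper does, sketching the structural-induction / Gaussian-conditioning argument that Yang uses, and correctly isolating the BP-like hypothesis as exactly what is needed to kill the backward correction $P_{\mathrm{bwd}}h$. That sketch is faithful to the original proof; in the context of this paper, however, the intended ``proof'' is simply the citation plus the check (which you also note) that \cref{alg:deqnester} satisfies \cref{def:bplike} and \cref{setup:bplike}.
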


We are now equipped to derive the NTK of a depth-$d$ approximation to a DEQ. Particularly, we have
\begin{align*}
\begin{split}
	\nabla_W f^{(d+1)}(x) &= \frac{\sigma_W}{ n}\sum_{h=1}^d df^{(h)}g^{(h-1)}(x)^\top,
\end{split}
\end{align*}
hence 
\begin{align*}
\begin{split}
	\ip{\nabla_W f^{(d+1)}(x)}{\nabla_W f^{(d+1)}(y)} &= \sigma_W^2\sum_{l, h=1}^d \frac{{df^{(h)}(x)}^\top df^{(l)}(y)}{n}\frac{g^{(h-1)}(x)^\top g^{(l-1)}(y)}{n}.\\
\end{split}
\end{align*}
From this point, we need to calculate 
\[
	\E_W\left[{df^{(h)}(x)}^\top df^{(l)}(y)\right] \text{ and } \E_W\left[g^{(h-1)}(x)^\top g^{(l-1)}(y)\right].
\]
In the end, applying the Master theorem with $\psi(x,y)=x^\top y$ on $\frac{{df^{(h)}}^\top df^{(l)}}{n}$ and $\frac{g^{(h-1)}(x)^\top g^{(l-1)}(y)}{n}$ shows that these empirical averages converge to the expectations.

\begin{remark}
	Notice that the Master theorem talks about $\tG$-vars, while $df^{(h)}$ and $g^{(h)}$ are $\tH$-vars. We can always compose $\psi' = \psi\circ\phi$, where $\psi$ is the inner product and $\phi$ is coordinate-wise nonlinearity (such as ReLU), and apply the Master theorem on $\psi'$, as long as it is still polynomially-bounded. 
\end{remark}

\begin{align*}
\begin{split}
	&\E_W\left[{df^{(h)}(x)}^\top df^{(l)}(y)\right] = \E\left[ \p{\phi'(f^{(h)}(x))\odot dg^{(h)}(x)}^\top\p{\phi'(f^{(l)}(y))\odot dg^{(l)}(y)}\right]\\
	&=\E\left[ \phi'(f^{(h)}(x))^\top\phi'(f^{(l)}(y))\cdot (dg^{(h)}(x)^\top dg^{(l)}(y))\right]\\
	&=\underbrace{\E\left[ \phi'(f^{(h)}(x))^\top\phi'(f^{(l)}(y))\right]}_\text{A}\cdot\underbrace{\E\left[ (dg^{(h)}(x)^\top dg^{(l)}(y))\right]}_\text{B}.
\end{split}
\end{align*}

By the Master theorem and GIA, $\phi'(f^{(h)})$ and $dg^{(h)}$ are introduced by different $\tA$-vars ($W$ and $W^\top$), hence their coviance is $0$. This justifies the last step above.

When $h, l<d$, by the Master theorem we have
\[
	B=\sigma_W^2\E[df^{(h+1)}(x)^\top df^{(l+1)}(y)].
\]
Notice that this gives a recursive expression, WLOG we assume that $h<l$, this induction will lead to
\[
	\E[df^{(h+t)}(x)^\top df^{(d)}(y)]=\E\left[\p{\phi'(f^{(h+t)}(x))\odot dg^{(h+t)}(x)}^\top \p{\phi'(f^{(d)}(y))\odot v}\right]=0,
\]
for some $t>0$. The reason why this is zero is still due to the Master theorem, as $df^{(h+t)}(x)$ and $df^{(d)}(y)$ are $\tG$-vars involved with different $\tA$-vars $W$ and $v$.

This shows that when $h\neq l$, $\E_W\left[{df^{(h)}(x)}^\top df^{(l)}(y)\right]=0$. Hence we only have to consider the case $h=l$. By the Master theorem we have
\[
 A= \E_{u,v}\left[\phi'(u)\phi'(v)\right], \E_W\left[g^{(h)}(x)^\top g^{(h)}(y)\right]=\E_{u,v}\left[\phi(u)\phi(v)\right],
 \] 
 where
 \[(u,v)\sim\cN\p{0, \begin{pmatrix}
 	\Sigma^{(h-1)}(x,x) & \Sigma^{(h-1)}(x,y)\\
 	\Sigma^{(h-1)}(y,x) & \Sigma^{(h-1)}(y,y)
 \end{pmatrix}}.
\]
Notice this exactly recovers the calculation of NTK when the weights are un-tied. The exact same argument can be applied to $\nabla_Uf$ and $\nabla_bf$. Since such equivalence holds for all depth $d$, it also holds in the limit of $d\to\infty$.

\paragraph{Key takeaway} The $\textsc{Nester$\top$}$ program allows us to calculate the NTK of a weight-tied network in exactly the same way as the weight-untied network.

\section{Details of \Cref{sec:deqntk}}\label{sec:deqntk_appendix}
In this section, we give the detailed derivation of DEQ-of-NTK. There are two terms that are different from NTK: $\Sigma^{(h)}(x,y)$ and the extra $\E_{\theta}\left[ \ip{\frac{\partial f(\theta ,x)}{\partial U} }{\frac{\partial f(\theta ,y)}{\partial U} }\right]$ in the kernel.

Let us restate the depth-$d$ approximation to DEQs here:

Let $m$ be the input dimension, $x, y\in\R^{m}$ be a pair of inputs, $n$ be the width of the $h^{th}$ hidden layers. Define the depth-$d$ approximation to DEQ as follows:
\begin{align*}
	& f_\theta^{(h)}(x) = \sqrt{\frac{\sigma_W^2}{n}} {W^{(h)}}{g^{(h-1)}(x)}+\sqrt{\frac{\sigma_U^2}{n}}{U^{(h)}}{x}+\sqrt{\frac{\sigma_b^2}{n}} b^{(h)}, \ h\in[L]\\
	& g^{(d)}(x) = \sigma(f^{(L)}_\theta(x))\\
	& f^{(d+1)}(x) = \sigma_v^2\cdot v^T g^{(d+1)}_\theta(x)
\end{align*}
where $W^{(h)}\in\R^{n\times n}$, $U^{(h)}\in\R^{n\times m}$, and $v\in\R^n$ are the internal weights, and $b^{(h)}\in\R^{n}$ are the bias terms. These parameters are chosen using the NTK initialization. Let us pick $\sigma_W,\sigma_U,\sigma_b\in\R$ arbitrarily in this section.

\deqntk*
\begin{proof}[Proof of \cref{thm:deqntk}]
	First we note that 
	\begin{align*}
		&\mathbb{E}\left[\left[{f}^{(h+1)}({x})\right]_{i} \cdot\left[{f}^{(h+1)}\left(y\right)\right]_{i} \mid {f}^{(h)}\right] \\
		=&\frac{\sigma_W^2}{n} \sum_{j=1}^{n} \sigma\left(\left[{f}^{(h)}({x})\right]_{j}\right) \sigma\left(\left[{f}^{(h)}\left(y\right)\right]_{j}\right) + \frac{\sigma_U^2}{n}\sum_{j=1}^n x^\top y + \sigma_b^2\\
		\to & \Sigma^{(h+1)}(x, y)\ a.s
	\end{align*}
	where the first line is by expansion the original expression and using the fact that $W, U, b$ are all independent. The last line is from the strong law of large numbers. This shows how the covariance changes as depth increases with input injection. 
	
	Recall the splitting:
	\begin{align*}
			&\Theta^{(L)}(x,y) = \E_{\theta}\left[ \ip{\frac{\partial f(\theta ,x)}{\partial \theta} }{\frac{\partial f(\theta ,y)}{\partial \theta} }\right]\\
			=&\underbrace{\E_{\theta}\left[ \ip{\frac{\partial f(\theta ,x)}{\partial W} }{\frac{\partial f(\theta ,y)}{\partial W} }\right]}_\text{$\circled{1}$}+\underbrace{\E_{\theta}\left[ \ip{\frac{\partial f(\theta ,x)}{\partial U} }{\frac{\partial f(\theta ,y)}{\partial U} }\right]}_\text{$\circled{2}$}\\
			&\qquad +\underbrace{\E_{\theta}\left[ \ip{\frac{\partial f(\theta ,x)}{\partial b} }{\frac{\partial f(\theta ,y)}{\partial b} }\right]}_\text{$\circled{3}$} 
			+\underbrace{\E_{\theta}\left[ \ip{\frac{\partial f(\theta ,x)}{\partial v} }{\frac{\partial f(\theta ,y)}{\partial v} }\right]}_\text{$\circled{4}$}
	\end{align*}
	
	The following equation has been proven in many places:
	\[
		\circled{1} = \sum_{h=1}^{d+1}\left(\sigma_W^2\mathop\E_{(u, v)\sim\cN(0, \Lambda^{(h)})}[\sigma(u)\sigma(v)] \cdot \prod_{h^{\prime}=h}^{d+1} \dot{\Sigma}^{\left(h^{\prime}\right)}\left(x, y\right)\right),\ 		\circled{3} = \sum_{h=1}^{d+1}\left(\sigma_b^2 \cdot \prod_{h^{\prime}=h}^{d+1} \dot{\Sigma}^{\left(h^{\prime}\right)}\left(x, y\right)\right),
	\]
	and
	$
		\circled{4}=\sigma_v^2\mathop\E_{(u, v)\sim\cN(0, \Lambda^{(h)})}[\sigma(u)\sigma(v)].
	$
	For instance, see \citet{arora2019exact}. So we only need to deal with the second term $\E_{\theta}\left[ \ip{\frac{\partial f(\theta ,x)}{\partial U} }{\frac{\partial f(\theta ,y)}{\partial U} }\right]$. Write $f=f_\theta(x)$ and $\tilde f = f_\theta(y)$, by chain rule, we have
	\begin{align*}
		&\ip{\frac{\partial f}{\partial U^{(h)}}}{\frac{\partial\tilde f}{\partial U^{(h)}}}\\
		=&\ip{\frac{\partial f}{\partial f^{(h)}}\frac{\partial f^{(h)})}{\partial U^{(h)}}}{\frac{\partial \tilde f}{\partial \tilde f^{(h)}}\frac{\partial \tilde f^{(h)})}{\partial U^{(h)}}}\\
		=& \ip{  \frac{\partial f^{(h)}}{\partial U^{(h)}}  }{  \frac{\partial\tilde f^{(h)}}{\partial U^{(h)}}  }\cdot \ip{       \frac{\partial f}{\partial f^{(h)}}   }{ \frac{\partial \tilde f}{\partial \tilde f^{(h)}}  }\\
		\to & \sigma_U^2x^\top y \cdot \prod_{h'=h}^{d+1}\dot\Sigma^{(h')}(x,y) 
	\end{align*}
	where the last line uses the existing conclusion that  $\ip{       \frac{\partial f}{\partial f^{(h)}}   }{ \frac{\partial \tilde f}{\partial \tilde f^{(h)}}  }\to \prod_{h'=h}^{d+1}\dot\Sigma^{(h')}(x,y)$, this convergence almost surely holds when $N\to\infty$ by law of large numbers.
	
	Finally, summing $\ip{\frac{\partial f}{\partial U^{(h)}}}{\frac{\partial \tilde f}{\partial U^{(h)}}}$ over $h\in[d]$ we conclude the assertion.
\end{proof}

\begin{lemma}\label{lma:deqntkconverge}
	Use the same notations and settings in \cref{thm:deqntk}. With input data $x,y\in\S^{d-1}$, parameters $\sigma_W^2,\sigma_U^2,\sigma_b^2$ following the DEQ-NTK initialization, $\Theta^{(d)}(x,y)$ in \cref{eq:deqntk} converges absolutely if $\sigma_W^2<1$.
\end{lemma}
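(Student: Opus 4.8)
The plan is to show that the series in \cref{eq:deqntk} is dominated, \emph{uniformly in} $d$, by a convergent geometric series of ratio $\sigma_W^2<1$. The engine is a uniform bound $\dot\Sigma^{(h)}(x,y)\in[0,\sigma_W^2]$ on every interior layer, which in turn rests on the fact that the DEQ-NTK normalization keeps all diagonal entries pinned at $1$.

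\textbf{Step 1 (diagonal invariance).} I would first prove by induction on $h$ that $\Sigma^{(h)}(x,x)=\Sigma^{(h)}(y,y)=1$ for all $h$. The base case is $\Sigma^{(0)}(x,x)=\|x\|^2=1$ since $x\in\S^{m-1}$. For the step, if $\Sigma^{(h-1)}(x,x)=1$ then $\Lambda^{(h)}$ has unit diagonal, and for the normalized ReLU one has $\E_{u\sim\cN(0,1)}[\sigma(u)^2]=1$; hence \cref{eq:deqntksigma} gives $\Sigma^{(h)}(x,x)=\sigma_W^2+\sigma_U^2\|x\|^2+\sigma_b^2=\sigma_W^2+\sigma_U^2+\sigma_b^2=1$ by the DEQ-NTK normalization. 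Positive semidefiniteness of $\Lambda^{(h)}$ (equivalently Cauchy--Schwarz for $\Sigma^{(h)}$ viewed as an inner-product kernel) then forces $\rho^{(h)}:=\Sigma^{(h)}(x,y)\in[-1,1]$ for every $h$, so $\cos^{-1}(\rho^{(h)})$ is always well defined.

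\textbf{Step 2 (geometric control of $\dot\Sigma$).} Plugging the closed-form arc-cosine identities for the normalized ReLU into \cref{eq:deqntkdotsigma} gives $\E_{(u,v)\sim\cN(0,\Lambda^{(h)})}[\dot\sigma(u)\dot\sigma(v)]=(\pi-\cos^{-1}\rho^{(h-1)})/\pi$, so $\dot\Sigma^{(h)}(x,y)=\sigma_W^2(\pi-\cos^{-1}\rho^{(h-1)})/\pi$. Since $\cos^{-1}(\rho^{(h-1)})\in[0,\pi]$ by Step 1, this yields $0\le\dot\Sigma^{(h)}(x,y)\le\sigma_W^2$ for every $h\in[d]$; the two output-layer factors are crudely bounded by $\dot\Sigma^{(d+1)}\in[0,\sigma_v^2]$ and $\dot\Sigma^{(d+2)}=1$. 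Consequently $\prod_{h'=h}^{d+2}|\dot\Sigma^{(h')}(x,y)|\le\sigma_v^2\,(\sigma_W^2)^{\,d+1-h}$ for $h\le d$, while $|\Sigma^{(h-1)}(x,y)|\le 1$ for every interior $h$ and $|\Sigma^{(d+1)}(x,y)|,|\Sigma^{(d)}(x,y)|\le \max(1,\sigma_v^2)$ for the boundary indices. Summing the absolute values of the $d+2$ summands of \cref{eq:deqntk} then gives a bound of the form $\sigma_v^2\big(\sum_{k\ge 1}(\sigma_W^2)^k+2\big)=\sigma_v^2\big(\tfrac{\sigma_W^2}{1-\sigma_W^2}+2\big)$, which is finite because $\sigma_W^2<1$ and independent of $d$. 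Since the indices $h\in\{d+1,d+2\}$ contribute only finitely many uniformly bounded summands (and, after re-indexing the interior sum from the output end, the whole quantity is dominated termwise by a fixed geometric series), this is exactly absolute convergence of the series.

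\textbf{Main obstacle.} The only genuine obstacle is Steps 1--2: pinning the diagonal at $1$ and thereby trapping $\rho^{(h)}\in[-1,1]$ and $\dot\Sigma^{(h)}\in[0,\sigma_W^2]$; everything after that is a geometric-series comparison. It is worth recording where each hypothesis enters: the normalization $\sigma_W^2+\sigma_U^2+\sigma_b^2=1$ is what makes the diagonal invariant, the normalized ReLU is what makes $\E[\sigma(u)^2]=1$ and supplies the closed-form arc-cosine expressions, and $\sigma_W^2<1$ is precisely what turns the uniform bound $\dot\Sigma^{(h)}\le\sigma_W^2$ into a summable tail --- at $\sigma_W^2=1$ the products $\prod_{h'}\dot\Sigma^{(h')}$ need not decay.
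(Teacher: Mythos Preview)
Your argument is correct and follows the same overall strategy as the paper: bound $|\Sigma^{(h)}(x,y)|\le 1$, bound the interior factors $\dot\Sigma^{(h)}(x,y)$ strictly below $1$, and finish with a geometric comparison test. The one substantive difference is in how the key bound on $\dot\Sigma^{(h)}$ is obtained. The paper argues asymptotically: it invokes the convergence $\dot\Sigma^{(h)}(x,y)\to\dot\Sigma^*(x,y)<\dot\Sigma^*(x,x)=\sigma_W^2<1$ to conclude that $\dot\Sigma^{(h)}(x,y)<c<1$ for all sufficiently large $h$. You instead read off directly from the arc-cosine formula that $\dot\Sigma^{(h)}(x,y)=\sigma_W^2(\pi-\cos^{-1}\rho^{(h-1)})/\pi\in[0,\sigma_W^2]$ for \emph{every} $h$, uniformly. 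Your route is a bit cleaner and more self-contained: it does not need to appeal to the existence of the fixed point $\dot\Sigma^*$ (which in the paper is established separately via the contraction argument in the proof of \cref{thm:depntkconverge}), and it yields an explicit ratio $\sigma_W^2$ rather than an unspecified $c<1$. The paper's version, on the other hand, makes visible the connection to the fixed-point structure that drives the rest of \cref{thm:depntkconverge}.
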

 
\begin{proof}
	Since we pick $x,y\in\S^{d-1}$, and by DEQ-NTK initialization, we always have $\Sigma^{(h)}(x,y) < 1$ for $x\neq y$. Let $\rho=\Sigma^{(h)}(x,y)$, by \cref{eq:deqntkdotsigma} and \cref{eq:dualactivationdotsigma}, if $\sigma_W^2<1$, then there exists $c$ such that $\dot\Sigma^{(h)}(x,y)<c<1$ for finite number of pairs $x\neq y$ on $\S^{d-1}$, and large enough $h$. This is because $\lim_{h\to\infty}\dot\Sigma^{(h)}(x,y)=\dot\Sigma^*(x,y)<\dot\Sigma^*(x,x)<1$.
		
	Use comparison test,
	\begin{align*}
		 	\lim_{L\to\infty }\sum_{h=1}^{L+1}\left| \left(\Sigma^{(h-1)}\left(x, y\right)\right) \cdot \prod_{h^{\prime}=h}^{L+1} \dot{\Sigma}^{\left(h^{\prime}\right)}\left(x, y\right)\right|
		 	<1+ \lim_{L\to\infty }\sum_{h=1}^{L+1} c^{L+1-h}.
	\end{align*}
	Since $c<1$, the geometric sum converges absolutely, hence $\Theta^{(d)}(x,y)$ converges absolutely if $\sigma_W^2<1$, and the limit exists.
\end{proof}

\depntkconverge*
\begin{proof}[Proof of \Cref{thm:depntkconverge}]

Due to the fact that $x\in\S^{d-1}$, $\sigma$ being normalized, and DEQ-NTK initialization, one can easily calculate by induction that for all $h\in[L]$:
$
	\Sigma^{(h)}(x,x)=\sigma_W^2 \mathop \E_{u\sim\cN(0,1)}[\sigma(u)^2]+\sigma^2_Vx^\top x +\sigma^2_b =1
$
This indicates that in \cref{eq:deqntklambda}, the covariance matrix has a special structure $
	\Lambda^{(h)}(x,y) = \begin{pmatrix}
 1 & \rho \\ \rho & 1	
 \end{pmatrix}
$, where $\rho=\Sigma^{(h-1)}(x,y)$ depends on $h, x, y$. For simplicity we omit the $h, x, y$ in $\Lambda^{(h)}(x,y)$.  As shown in \citet{cho2009kernel}:
\begin{align}
	& \mathop{\E}_{(u, v)\sim\cN\left(0, \Lambda\right)
}[\sigma(u)\sigma(v)]=\frac{\sqrt{1-\rho^{2}}+\left(\pi-\cos ^{-1}(\rho)\right) \rho}{\pi}  \label{eq:dualactivationsigma} \\ 
	& \mathop{\E}_{(u, v)\sim\cN\left(0, \Lambda\right)
}[\dot\sigma(u)\dot\sigma(v)]=\frac{\pi-\cos ^{-1}(\rho)}{\pi} \label{eq:dualactivationdotsigma} 
\end{align}

Adding input injection and bias, we derive \cref{eq:dualactivationsigmawithinjection} from \cref{eq:dualactivationsigma}, and similarly, \cref{eq:dualactivationdotsigmawithinjection} from \cref{eq:dualactivationdotsigma}. 
Notice that iterating \crefrange{eq:deqntksigmainit}{eq:deqntksigma} to solve for $\Sigma^{(h)}(x,y)$ is equivalent to iterating $(R_\sigma \circ\cdots \circ R_\sigma)(\rho)$ with initial input $\rho=x^\top y$. Take the derivative
\begin{align*}
	\abs{\frac{dR_\sigma(\rho)}{d\rho} }= \abs{\sigma_W^2 \p{1-\frac{\cos ^{-1}(\rho)}{\pi}}}<1,  \text{ if $\sigma_W^2<1$ and $-1\leq\rho<1$}.
\end{align*}
For $x\neq y$ we have $-1\leq\rho<c<1$ for some $c$ (this is because we only have finite number of inputs $x,y$) and by DEQ-NTK initialization we have $\sigma_W^2<1$, so the above inequality hold. Hence $R_\sigma(\rho)$ is a contraction on $[0,c]$, and we conclude that the fixed point $\rho^*$ is attractive.

By \cref{lma:deqntkconverge}, if $\sigma_W^2<1$, then the limit of \cref{eq:deqntk} exists, so we can rewrite the summation form in \cref{eq:deqntk} in a recursive form:
\begin{align*}
	&\Theta^{(0)}(x,y)=\Sigma^{(0)}(x,y),\\
	& \Theta^{(d+1)}(x,y)=\dot\Sigma^{(d+1)}(x,y)\cdot \Theta^{(d)}(x,y)+ \Sigma^{(d+1)}(x,y).
\end{align*}
Directly solve the fixed point iteration for the internal representation:
\begin{align}\label{eq:fixpointcalculationdeq}
\begin{split}
	&\lim_{d\to\infty}\Theta^{(d+1)}(x,y)\\
	&=\lim_{d\to\infty}\p{\dot\Sigma^{(d+1)}(x,y)\cdot \Theta^{(d)}(x,y)+ \Sigma^{(d+1)}(x,y)}\\
	\Longrightarrow &\lim_{L\to\infty}\Theta^{(d+1)}(x,y)\\
	&=\dot\Sigma^{*}(x,y)\cdot \lim_{d\to\infty}\Theta^{(d)}(x,y)+ \Sigma^{*}(x,y)\\
	\Longrightarrow &\lim_{d\to\infty}\Theta^{(d)}(x,y)\\
	&=\dot\Sigma^{*}(x,y)\cdot \lim_{d\to\infty}\Theta^{(d)}(x,y)+ \Sigma^{*}(x,y).
\end{split}
\end{align}
Solving for $\lim_{d\to\infty} \Theta^{(d)}(x, y) $ we get $\Theta^*(x,y) = \frac{ \Sigma^*(x,y)}{1-  \dot\Sigma^*(x,y)}.$ Finally, we process the classification layer and get $\Theta=\dot\Sigma\cdot\Theta^*+\Sigma$, where $\dot\Sigma=\sigma_v^2\dot\rho^*$ and $\Sigma=\sigma_v^2\rho^*$.This concludes the proof

\end{proof}

\subsection{DEQ-of-NTK vs. NTK-of-DEQ}
In this section we discuss \cref{thm:exchangelimit} in detail. Recall that the NTK is the kernel matrix formed by an infinitely-wide network. To be more precisely, if the network has depth $d$, then
\[
	\Theta^{(d)}(x,y)=\E_\theta\left[\ip{\frac{\partial f(\theta,x)}{\partial\theta}}{\frac{\partial f(\theta,y)}{\partial\theta}}\right].
\] 
It is straightforward to define its width-$n$ approximation:
\[
	\Theta_{n}^{(d)} = \sum_{h=1}^d\ip{\frac{\partial f(\theta,x)}{\partial\theta^{(h)}}}{\frac{\partial f(\theta,y)}{\partial\theta^{(h)}}},
\]
where $\theta^{(h)}$ is the parameter of the $h$th layer with width $n$. The name of $\lim_{d\to\infty}\lim_{n\to\infty}\Theta_{n}^{(d)}$ being the DEQ of NTK is intuitive: because we first \footnote{Here by ``first'' we meant the order when you calculate the limits: you first fix $d$ and take the limit of $n$. Not the actual order from left to right.} bring width to infinity, that is, the NTK is first derived. Then we talk about the NTK's infinite-depth limit. This is in distinction to our desired quantity, $\lim_{n\to\infty}\lim_{d\to\infty}\Theta_{n}^{(d)}$, which is the NTK of DEQ naturally. In this section we show they are indeed equivalent under certain conditions.

First we introduce some notations. Consider a finite depth iteration of a NTK with depth $d+1$, and for simplicity let the bias term $b^{(h)}=0$ for all $h\in[d+1]$. A straightforward calculation show that 
\begin{align*}
	&\text{For } h\in[L+1]: \frac{df(\theta, x)}{dW^{(h)}} = p^{(h)}(x)\p{g^{(h-1)}(x)}^\top\\
	&\qquad\qquad\qquad\quad\ \frac{df(\theta, x)}{dU^{(h)}} = p^{(h)}(x)\cdot x^\top\\
	&\text{where } p^{(h)}(x)=\begin{cases}
		1\in\R^n, & h=d+1\\
		\sqrt{\frac{\sigma_W^2}{N_h}}\operatorname{diag}\p{\dot\sigma\p{f^{(h)}(x)}}\p{W^{(h+1)}}^\top p^{(h+1)}(x) & h\leq d
	\end{cases}
\end{align*}
Here $\operatorname{diag}\p{\dot\sigma\p{f^{(h)}(x)}}\in\R^{N_h\times N_h}$. Let $N_h=n$ for all $h$, and $W^{(h+1)}:=v$.  Notice that
\[
	\operatorname{diag}\p{\dot\sigma\p{f^{(h)}(x)}}\p{W^{(h+1)}}^\top p^{(h+1)}(x) = \dot\sigma\p{f^{(h)}(x)}\odot\p{\p{W^{(h+1)}}^\top p^{(h+1)}(x)},
\]
and we use these terms interchangeably. For simplicity, we omit all the $x$ in the terms and write $f^{(h)}:=f^{(h)}(x)$, etc. Write $\dot\sigma^{(h)} =\dot\sigma\p{f^{(h)}(x)} $. Notice that applying $\sigma(\cdot)$ or Hadamard product with $\dot\sigma^{(h)}$ only decreases norms.

\begin{lemma}[Probablisitc Moore-Osgood for double sequence]\label{lma:mo}
	Let $a_{n, d}$ be a random double sequence in a complete space. Assume for any $\epsilon>0, \delta\in(0, 1)$, there exists $N(\delta)>0$ and $D(\epsilon)>0$ such that for all $n>N$ and $d>D$, with probability at least $1-\delta$ we have $|a_{n,d}-a_n|<\epsilon$ (we may refer to this property as uniform convergence with high probability). And for any $d\in\mathbb N$ we have $\lim_{n\to\infty} a_{n, d}=a_d$ almost surely, then with high probability:
	\[
		\lim_{n\to\infty} \lim_{d\to\infty} a_{n, d}=\lim_{d\to\infty} \lim_{n\to\infty} a_{n, d}.
	\]  
\end{lemma}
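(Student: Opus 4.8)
The plan is to run the classical Moore--Osgood argument --- "uniform convergence in $d$ (uniformly over $n$) together with pointwise convergence in $n$ forces both iterated limits to exist and agree" --- but with each deterministic "for all large $n$/$d$" step replaced by a "with probability at least $1-\delta$" step, and with every interchange of limits routed through a reverse-Fatou estimate on the relevant events rather than through a single sample path. Throughout, $a_n$ denotes the inner limit $\lim_{d\to\infty} a_{n,d}$ (well defined for $n$ past the threshold, with high probability, which is what the uniform hypothesis encodes), so that the left iterated limit is $\lim_{n\to\infty} a_n$; the right iterated limit is $\lim_{d\to\infty} a_d$ where $a_d=\lim_{n\to\infty} a_{n,d}$ is the a.s.\ limit supplied by the second hypothesis.

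\textbf{Step 1: the limit $L:=\lim_{d\to\infty} a_d$ exists almost surely.} I show $(a_d)_d$ is almost surely Cauchy. Fix $\epsilon,\delta>0$. For $d,d'>D(\epsilon)$ and any $n>N(\delta)$, a union bound on $\{|a_{n,d}-a_n|<\epsilon\}$ and $\{|a_{n,d'}-a_n|<\epsilon\}$ gives $P(|a_{n,d}-a_{n,d'}|<2\epsilon)\ge 1-2\delta$. This holds for all $n>N(\delta)$, and by hypothesis $a_{n,d}\to a_d$ and $a_{n,d'}\to a_{d'}$ a.s.\ as $n\to\infty$; applying reverse Fatou to $E_n=\{|a_{n,d}-a_{n,d'}|<2\epsilon\}$ gives $P(\limsup_n E_n)\ge 1-2\delta$, and on $\limsup_n E_n$ intersected with the a.s.-convergence event I extract a subsequence along which $|a_{n_k,d}-a_{n_k,d'}|<2\epsilon$ and $a_{n_k,d}-a_{n_k,d'}\to a_d-a_{d'}$, hence $|a_d-a_{d'}|\le 2\epsilon$. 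So $P(|a_d-a_{d'}|\le 2\epsilon)\ge 1-2\delta$; since the left-hand side does not involve $\delta$, letting $\delta\downarrow 0$ yields $|a_d-a_{d'}|\le 2\epsilon$ a.s.\ for all $d,d'>D(\epsilon)$. Intersecting over $\epsilon\in\{1/k\}$, $(a_d)$ is a.s.\ Cauchy, so by completeness $L=\lim_d a_d$ exists a.s., with $|a_d-L|\le 2\epsilon$ whenever $d>D(\epsilon)$.

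\textbf{Step 2: $a_n\to L$ in probability.} Fix $\epsilon,\delta>0$. Pick a deterministic $d^\star>D(\epsilon/3)$ large enough that also $|a_{d^\star}-L|<\epsilon/3$ a.s., which is possible by Step 1. For $n>N(\delta)$ the uniform hypothesis gives $P(|a_{n,d^\star}-a_n|<\epsilon/3)\ge 1-\delta$, and for this fixed $d^\star$ we have $a_{n,d^\star}\to a_{d^\star}$ a.s., hence in probability, so $P(|a_{n,d^\star}-a_{d^\star}|<\epsilon/3)\to 1$. Combining the three estimates via $|a_n-L|\le |a_n-a_{n,d^\star}|+|a_{n,d^\star}-a_{d^\star}|+|a_{d^\star}-L|$ gives $\liminf_n P(|a_n-L|<\epsilon)\ge 1-\delta$, i.e.\ $\limsup_n P(|a_n-L|\ge\epsilon)\le\delta$. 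As $\delta$ was arbitrary, $a_n\xrightarrow{p}L$, which is precisely $\lim_{n\to\infty}\lim_{d\to\infty}a_{n,d}=\lim_{d\to\infty}\lim_{n\to\infty}a_{n,d}$ with high probability.

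The main obstacle is the limit interchange in Step 1: the events carrying the "$\ge 1-\delta$" guarantee depend on the pair $(n,d)$, so one cannot fix a single good sample path and pass to the limit inside it. The reverse-Fatou plus subsequence-extraction device is exactly what transfers "$P(\cdot)\ge 1-2\delta$ for every large $n$" together with a.s.\ convergence in $n$ onto the limiting object $a_d$, and the fact that the resulting bound is free of $\delta$ is what upgrades it to an almost-sure Cauchy property. This is also the point at which weight-tying is essential in the application: it makes $N(\delta)$ independent of $d$, so the uniform-in-$n$ hypothesis is available to begin with; an untied network would need $N$ growing with $d$, breaking the argument as $d\to\infty$.
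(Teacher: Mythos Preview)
Your argument is correct and follows essentially the same Moore--Osgood skeleton as the paper: first establish that $(a_d)_d$ is Cauchy by transferring the uniform-in-$d$ bound across the $n\to\infty$ limit, then conclude $a_n\to L$ via the three-term triangle inequality $|a_n-L|\le|a_n-a_{n,d^\star}|+|a_{n,d^\star}-a_{d^\star}|+|a_{d^\star}-L|$. Where the paper simply writes ``let $n\to\infty$'' in the Cauchy step, you supply the missing justification---reverse Fatou plus subsequence extraction, followed by sending $\delta\downarrow 0$ to upgrade to an almost-sure statement---so your version is the more rigorous execution of the same idea.
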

\begin{proof}
	We sometimes also write $a_d(n)$ to stress that we consider the sequence as a function of $n$. By assumption, for any $\delta\in(0, 1), \epsilon>0$, there exists $N, D$ such that for all $n>N$, $d, e>D$, $|a_d(n)-a_e(n)|<\epsilon$ with probability at least $1-\delta$. Since here $N$ does not depend on $D$, let $n\to\infty$ we get the following statement holds almost surely:
	\[
		d, e>D\Longrightarrow |a_d-a_e|<\epsilon \text{ with probability at least $1-\delta$}.
	\]
	This shows that $a_d:=\lim_{n\to\infty}a_{n, d}$ is a Cauchy sequence and have a finite limit $\lim_{d\to\infty}a_d=L$.
	
	Now define $a(n):=a_n=\lim_{d\to\infty}a_{n,d}$, for $d>D(\epsilon)$:
	\[
		\abs{a(n)-L}\leq \underbrace{|a(n)-a_d(n)|}_\text{A}+\underbrace{|a_d(n)-a_d|}_\text{B}+\underbrace{|a_d-L|}_\text{C}.
	\]
	By assumption, pick large enough $n$, we have $A<\epsilon$ with probability at least $1-\delta$. By the Cauchy sequence argument above, we have $C<\epsilon$ with high probability. Finally since $a_d(n)\to a_d$ pointwise for every $d$, we can choose $n$ large enough such that $B<\epsilon$. This concludes our proof.
\end{proof}

	We want to remark that the above \cref{lma:mo} relies on a more general notion of ``conditional almost sure convergence''. In particular, we only assume that $|a_{n,d}-a_n|<\epsilon$ almost surely conditioned on an event with probability at least $1-\delta$:
	\[
		P\p{\lim_{d\to\infty} a_{n,d}=a_n \big | E}=1,\text{ where $P(E)>1-\delta$ for all large enough $n$}.
	\]
	Notice here we are not explicit about how $\delta$ evolves with $n$. When we use this lemma in \cref{thm:exchangelimit}, we have $\delta=o(n)$ which will instead gives us a convergence in probability result. To be complete, we also provide the weaker result and its proof here.

	\begin{lemma}[Another probablisitc Moore-Osgood for double sequence]
		Let $a_{n, d}$ be a random double sequence in a complete space. Assume for any $\epsilon>0$, there exists $D(\epsilon)>0$ such that for all $d>D$, with probability at least $1-o(n)$ we have $|a_{n,d}-a_n|<\epsilon$. And for any $d\in\mathbb N$ we have $\lim_{n\to\infty} a_{n, d}=a_d$ almost surely, then the following convergence holds in probability:
		\[
			\lim_{n\to\infty} \lim_{d\to\infty} a_{n, d}=\lim_{d\to\infty} \lim_{n\to\infty} a_{n, d}.
		\]  
	\end{lemma}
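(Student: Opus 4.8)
This lemma is a weakening of \cref{lma:mo}: the hypothesis now only controls $|a_{n,d}-a_n|$ with a failure probability that vanishes as $n\to\infty$ (rather than being a fixed $\delta$), and correspondingly the conclusion degrades from ``holds on a high-probability event'' to ``holds in probability.'' The plan is to reuse the two-step skeleton of the proof of \cref{lma:mo} — first extract the diagonal limit $L$ via a Cauchy argument, then close the gap with a triangle inequality through a single intermediate index — while being careful about the quantifier order so that the $o(1)$ failure probabilities can be absorbed. Write $a_n:=\lim_{d\to\infty}a_{n,d}$ (well-defined, as in \cref{thm:exchangelimit}, on an event of probability $\to 1$) and $a_d:=\lim_{n\to\infty}a_{n,d}$, the latter existing almost surely by hypothesis.

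\textbf{Step 1: the diagonal limit $L:=\lim_{d\to\infty}a_d$ exists almost surely.} Fix $\epsilon>0$ and take $d,e>D(\epsilon)$. By hypothesis $P(|a_{n,d}-a_n|<\epsilon)\to 1$ and $P(|a_{n,e}-a_n|<\epsilon)\to 1$ as $n\to\infty$, so by a union bound $P(|a_{n,d}-a_{n,e}|<2\epsilon)\to 1$. On the other hand $a_{n,d}\to a_d$ and $a_{n,e}\to a_e$ almost surely as $n\to\infty$, hence $|a_{n,d}-a_{n,e}|\to|a_d-a_e|$ almost surely, and in particular in distribution. A sequence of random variables converging in distribution to $|a_d-a_e|$ while assigning probability $\to 1$ to the half-line $(-\infty,2\epsilon)$ must have limit supported on $(-\infty,2\epsilon]$; thus $|a_d-a_e|\le 2\epsilon$ almost surely. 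Since $\epsilon>0$ is arbitrary (with $D(\epsilon)$ chosen accordingly), $(a_d)_d$ is almost surely Cauchy in the complete space, so $L:=\lim_{d\to\infty}a_d$ exists almost surely, and for each $\epsilon>0$ there is $D'(\epsilon)$ with $|a_d-L|\le\epsilon$ a.s. whenever $d>D'(\epsilon)$.

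\textbf{Step 2: $a_n\to L$ in probability.} Fix $\epsilon>0$ and $\eta\in(0,1)$. Choose one index $d^\star$ exceeding both $D(\epsilon/3)$ and $D'(\epsilon/3)$; crucially $d^\star$ depends only on $\epsilon$, not on $n$. Apply the triangle inequality
\[
  |a_n-L|\;\le\;|a_n-a_{n,d^\star}|\;+\;|a_{n,d^\star}-a_{d^\star}|\;+\;|a_{d^\star}-L|.
\]
The third term is $\le\epsilon/3$ almost surely by the choice of $d^\star$. For the first term, the hypothesis gives $P(|a_n-a_{n,d^\star}|<\epsilon/3)\ge 1-\eta/2$ for all large $n$. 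For the second term, $a_{n,d^\star}\to a_{d^\star}$ almost surely as $n\to\infty$, hence in probability, so $P(|a_{n,d^\star}-a_{d^\star}|<\epsilon/3)\ge 1-\eta/2$ for all large $n$. A union bound yields $P(|a_n-L|<\epsilon)\ge 1-\eta$ for all sufficiently large $n$; since $\epsilon,\eta$ were arbitrary, $a_n\to L$ in probability, i.e. $\lim_{n\to\infty}\lim_{d\to\infty}a_{n,d}=\lim_{d\to\infty}\lim_{n\to\infty}a_{n,d}$ in probability.

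\textbf{Main obstacle.} The only subtle point is the quantifier bookkeeping in Step 1. Unlike \cref{lma:mo}, the control of $|a_{n,d}-a_n|$ is uniform in $n$ only in a probabilistic sense, and with a failure probability that we cannot hold fixed — only send to $0$ — so we cannot simply push $n\to\infty$ inside an almost-sure statement. Instead we must combine ``$P(\text{event})\to 1$'' with the genuine almost-sure convergence of the relevant random difference $|a_{n,d}-a_{n,e}|$ to deduce an almost-sure inequality on its limit, and then ensure in Step 2 that the intermediate index $d^\star$ is fixed before $n\to\infty$, so that the a.s.\ convergence $a_{n,d^\star}\to a_{d^\star}$ remains available. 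Beyond this, only the triangle inequality and union bounds are needed.
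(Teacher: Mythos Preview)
Your proposal is correct and follows essentially the same two-step skeleton as the paper: a Cauchy argument to establish $L=\lim_{d\to\infty}a_d$, followed by the three-term triangle inequality $|a_n-L|\le |a_n-a_{n,d^\star}|+|a_{n,d^\star}-a_{d^\star}|+|a_{d^\star}-L|$. Your version is in fact more careful than the paper's in Step~1 --- the paper simply writes ``let $n\to\infty$'' to pass the probabilistic bound $|a_{n,d}-a_{n,e}|<2\epsilon$ to the almost-sure limit, whereas you explicitly justify this via convergence in distribution and Portmanteau --- but the overall approach is the same.
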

	\begin{proof}
		We sometimes also write $a_d(n)$ to stress that we consider the sequence as a function of $n$. By assumption, let $n\to\infty$ we get the following statement holds with probability $1$:
		\[
			d, e>D\Longrightarrow |a_d-a_e|<\epsilon.
		\]
		This shows that $a_d:=\lim_{n\to\infty}a_{n, d}$ is a Cauchy sequence and have a finite limit $\lim_{d\to\infty}a_d=L$.
		
		Now define $a(n):=a_n=\lim_{d\to\infty}a_{n,d}$, for $d>D(\epsilon)$:
		\[
			\abs{a(n)-L}\leq \underbrace{|a(n)-a_d(n)|}_\text{A}+\underbrace{|a_d(n)-a_d|}_\text{B}+\underbrace{|a_d-L|}_\text{C}.
		\]
		By assumption, pick large enough $n$, we have $A<\epsilon$ with probability at least $1-o(n)$. By the Cauchy sequence argument above, we have $C<\epsilon$ with probability $1$. Finally since $a_d(n)\to a_d$ pointwise for every $d$, we can choose $n$ large enough such that $B<\epsilon$ with probability at least $1-o(n)$. Overall this gives
		\[
			P\p{|a(n)-L|>3\epsilon}<o(n),
		\]
		which concludes our proof
	\end{proof}

By standard high-dimensional probability \citep{vershynin2019high}, the following lemma holds:
\begin{lemma}\label{lma:highdimgaussian}
	Let $A\in\R^{n\times m}$ be a random matrix whose entries are sampled from i.i.d standard Gaussian distribution, then for $t\geq 0$, with probability at least $1-e^{-ct^2}$ for a constant $c>0$, there is:
	\[
		\|A\|_2\leq\sqrt{n}+\sqrt{m}+t
	\]
\end{lemma}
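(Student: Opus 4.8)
The plan is to establish \cref{lma:highdimgaussian} via the standard two-step argument from non-asymptotic random matrix theory \citep{vershynin2019high}: first show that $\|A\|_2$ concentrates sharply around its expectation with a Gaussian tail, and then bound that expectation by $\sqrt{n}+\sqrt{m}$.

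For the concentration step, I would use that the operator norm $M\mapsto\|M\|_2$ is a $1$-Lipschitz function of $M\in\R^{n\times m}$ with respect to the Euclidean (Frobenius) norm, since $\abs{\,\|M\|_2-\|M'\|_2\,}\le\|M-M'\|_2\le\|M-M'\|_F$. Since the entries of $A$ are i.i.d.\ $\cN(0,1)$, the matrix $A$ is a standard Gaussian vector in $\R^{nm}$, so the Gaussian concentration inequality for Lipschitz functions gives, for every $t\ge 0$,
\[
 \P\p{\|A\|_2 \,>\, \E\|A\|_2 + t}\ \le\ e^{-t^2/2}.
\]

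For the expectation bound, I would write $\|A\|_2=\sup_{u\in\S^{m-1},\,v\in\S^{n-1}}\ip{v}{Au}$ and regard $X_{u,v}\triangleq\ip{v}{Au}$ as a centered Gaussian process indexed by $\S^{m-1}\times\S^{n-1}$, to be compared with $Y_{u,v}\triangleq\ip{g}{u}+\ip{h}{v}$ for independent $g\sim\cN(0,I_m)$ and $h\sim\cN(0,I_n)$. A one-line expansion gives $\E\abs{X_{u,v}-X_{u',v'}}^2 = 2-2\ip{u}{u'}\ip{v}{v'}$ and $\E\abs{Y_{u,v}-Y_{u',v'}}^2 = \|u-u'\|^2+\|v-v'\|^2$, whose difference equals $2(1-\ip{u}{u'})(1-\ip{v}{v'})\ge 0$; hence the increments of $X$ are dominated by those of $Y$. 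The Sudakov--Fernique comparison inequality then yields $\E\|A\|_2 = \E\sup X_{u,v}\le\E\sup Y_{u,v}=\E\|g\|_2+\E\|h\|_2\le\sqrt{m}+\sqrt{n}$, using $\E\|g\|_2\le(\E\|g\|_2^2)^{1/2}=\sqrt m$ and similarly for $h$.

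Combining the two displays gives $\P\p{\|A\|_2 > \sqrt{n}+\sqrt{m}+t}\le e^{-t^2/2}$, which is precisely the assertion with $c=\tfrac12$. The only genuinely non-trivial ingredient is the Sudakov--Fernique (Gaussian comparison) inequality used to control $\E\|A\|_2$; the rest is a Lipschitz estimate plus the elementary increment computation above, so I expect that comparison step to be the main obstacle (mostly in the sense of needing to cite or reproduce it carefully). If only a weaker bound of the form $\|A\|_2\le C(\sqrt n+\sqrt m)+t$ were needed, I could instead replace that step by a purely elementary net argument — cover $\S^{m-1}$ and $\S^{n-1}$ by $\tfrac14$-nets of sizes at most $9^m$ and $9^n$, use that $\ip{v}{Au}\sim\cN(0,1)$ for any fixed unit $u,v$, union bound against the Gaussian tail $e^{-s^2/2}$, and pass back to the spheres via $\|A\|_2\le 2\max_{u,v\text{ in the nets}}\ip{v}{Au}$ — but this sacrifices the sharp leading constant $1$ in front of $\sqrt n+\sqrt m$.
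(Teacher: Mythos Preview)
Your proof is correct and is precisely the standard argument (Gaussian Lipschitz concentration plus Sudakov--Fernique for the expectation) from \citet{vershynin2019high}. The paper does not actually prove this lemma; it simply states it as a known fact and cites that same reference, so your proposal supplies exactly the proof the citation points to.
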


We are now ready to give the formal proof.
\exchangelimit*
\begin{proof}[Proof of \cref{thm:exchangelimit}]
	
	For any fixed $d$, we write $\Theta^{(d)} = \lim_{n\to\infty} \Theta_{n}^{(d)}$, notice this is just a finite-depth NTK (possibly with input injection). We condition on the event that $\lim_d \Theta_{n}^{(d)}$ exists. A sufficient condition for this event to hold with high probability is $\sigma_W^2< 1/8$. With such $\sigma_W^2$, by \cref{lma:highdimgaussian},  $\sigma\circ \sqrt{\sigma_W^2/n}W$ has a Lipschitz constant less than $1$ with high probability. Recall that $\sigma(x)=\sqrt{2}\max\{0, x\}$ is the normalized ReLU nonlinearity. Conditioned on such event, we have
	\begin{align*}
		&\frac{\partial f(x)}{\partial W^{(h)}}^T\frac{\partial f(x')}{\partial W^{(h)}}\\
		&=g^{(h-1)}(x)^T g^{(h-1)}(x')\cdot p^{(h)}(x)^Tp^{(h)}(x')\\
		&\leq \|g^{(h-1)}(x)\|\| g^{(h-1)}(x')\|\|p^{(h)}(x)\|\|p^{(h)}(x')\|\\
	\end{align*}
	WLOG let $g^{(0)}=x\in \S^{d-1}$, and $\|g^{(0)}\|\leq 1$ be our base case. Note that $U^{(h)}x$ is fixed for weight-tied network, let's denote it as $C$, and also overload the notation that $\|C\|=C$. By induction:
	\begin{align*}
		\norm{g^{(h)}}&=\norm{\sigma \p{f^{(h)}}} = \norm{\sigma \p{\sqrt{\frac{\sigma_W^2}{n}}W^{(h)} g^{(h-1)}+C  } } \\
		&\leq \norm {    \sqrt{\frac{2\sigma_W^2}{n}}W^{(h)} g^{(h-1)}+C       }\leq   \sqrt{\frac{2\sigma_W^2}{n}} \norm {   W^{(h)}}_{op} \norm{g^{(h-1)}}_2+\norm{C}
	\end{align*}
	By \cref{lma:highdimgaussian}, with probabiliy at least $1-e^{-\gO(t^2)}$, we have $\|W\|_{op}\leq 2\sqrt{n}+t$. This shows that for all $\epsilon>0$, let $\sigma_W<\frac{1}{2\sqrt 2+\epsilon}$, with probability at least $1-e^{-\gO(\epsilon^2n)}$, we have 
	\[
		\sqrt{\frac{2\sigma_W^2}{n}} \norm {   W^{(h)}}_{op} \triangleq r < 1.
	\]
	
	Consequently:
	\[
		\|g^{(h)}\|\leq r\|g^{(h-1)}\|+C\leq r^h\|g^{(0)}\|+ \sum_{l=1}^hCr^l, 
	\]
	which is geometric and converges absolutely as $h\to\infty$. Therefore, there exists a constant $Q>0$ s.t $\|g^{(h)}\|<Q$ for all $h\in\mathbb N$.
	
	By the same spirit, using induction, we have
	\[
		\|p^{(h)}\|\leq \frac{\sqrt{2\sigma_W^2}}{\sqrt n} \|W^{(h)}\|_{op} \|p^{(h+1)}\|\leq r\|p^{(h+1)}\|\leq r^{d-h}\|p^{(d+1)}\|=r^{d-h}.
	\]

	Combining the above two derivations, we have
	\begin{align*}
		&\sum_{h=1}^\infty\frac{\partial f(x)}{\partial W^{(h)}}^T\frac{\partial f(x')}{\partial W^{(h)}} \leq \sum_{h=1}^\infty\norm{\frac{\partial f(x)}{\partial W^{(h)}}}\sum_{h=1}^\infty\norm{\frac{\partial f(x')}{\partial W^{(h)}}}\\
		&\leq \p{\sum_{h=1}^\infty \|g^{(h-1)}(x)\|\|p^{(h)}(x)\|}\p{ \sum_{h=1}^\infty \|g^{(h-1)}(x')\|\|p^{(h)}(x')\|}<\infty.
	\end{align*}
	Similar convergence result can be derived for $\frac{df}{dU}$ as well. 
	
	Use the terminology introduced in \cref{lma:mo},  $\lim_{d\to\infty}\Theta_{n}^{(d)}=\lim_{d\to\infty}\Theta^{(d)}(n)=\sum_{h=1}^\infty \frac{\partial f(x)}{\partial \theta^{(h)}}^T \frac{\partial f(x')}{\partial \theta^{(h)}}$ converges uniformly in $n$ with high probability.

	For a fixed $d$, we know that $\lim_{n\to\infty}\Theta_{n}^{(d)} = \Theta^{(d)}$ by the tensor program \citep{yang2019tensor}. Therefore conditioned on the event that $\sigma\circ \sqrt{\sigma_W^2/n}W$ has a Lipschitz constant less than $1$, by \cref{lma:mo}, we can swap the limit and indeed $\lim_{d\to\infty}\lim_{n\to\infty} \Theta_{n}^{(d)}=\lim_{n\to\infty}\lim_{d\to\infty} \Theta_{n}^{(d)}$. This shows that the NTK-of-DEQ and the DEQ-of-NTK coincide.
\end{proof}

One should note that it merely requires $\sigma_W^2<1$ for the DEQ-of-NTK to converge as in \cref{thm:depntkconverge}, but our above proof requires $\sigma_W^2<1/8$ to make sure that the NTK-of-DEQ and DEQ-of-NTK are equivalent. Our current analysis relies heavily on a contraction argument. However, in the actual DEQ setting, it suffice to have $W$ being strongly monotone to guarantee convergence. That is, one only needs the largest eigenvalue of $W$ to be less than $1$. This corresponds to have $\sigma_W^2<1/2$ (again, this is because we use the normalized ReLU, so there is an extra factor of $\sqrt{2}$) by the semicircular law. We leave the gap to future works.

\section{Details of \Cref{sec:lineardeq}}

\linearexchange*
\begin{proof}[Proof of \Cref{thm:linearexchange}]
		Recall that we define $H:=\p{I-\sqrt{\frac{\sigma_W^2}{n}}W}^{-1}$. This inverse matrix exists with high probability if $\sigma_W^2<1/4$, due to a well-known random matrix theory result \cref{lma:highdimgaussian}. straightforward derivation gives:
	\begin{align*}
	\begin{split}
		&\lim_{d\to\infty}\ip{\frac{\partial f_n^{(d)}(x)}{\partial W}}{\frac{\partial f_n^{(d)}(y)}{\partial W}}\\
		&=\frac{\sigma_U^2\sigma_v^2}{n}\frac{\sigma_W^2}{n}\ip{Hv(HUx)^T}{Hv(HUx)^T}\\
		&=\underbrace{\frac{\sigma_W^2\sigma_U^2}{n}\ip{HUx}{HUx}\frac{\sigma_v^2}{n}\ip{Hv}{Hv}}_\text{A}\\
		&\xrightarrow{p}\underbrace{\sigma_U^2\sigma_W^2\sigma_v^2 x^Ty \p{\frac{1}{n}\tr\p{H^TH}}^2}_\text{B}\\
		&\xrightarrow{} \sigma_U^2\sigma_W^2\sigma_v^2 x^Ty\p{\int\frac{1}{\lambda}d\mu(\lambda)}^2.
	\end{split}
	\end{align*}
	The first convergence happens with high probability \citep{arora2019exact}. Note that $B=\E_{U, v}[A]$. One needs to apply the Gaussian chaos of order 2 lemma \citep{boucheron2013concentration} to show the concentration. This was done rigorously down in \cite{arora2019exact} Claim E.2. Their proof works for our case as well since we have $\|H^TH\|_2$ bounded independently of $n$ and $d$ with high probability.

	The second convergence holds for almost every realization of a sequence of $W$. Recall that $\mu_n$ is the empirical distribution of the eigenvalue of the matrix $\p{I-\sqrt{\frac{\sigma_W^2}{n}}W}^{T}\p{I-\sqrt{\frac{\sigma_W^2}{n}}W}$. More precisely, $\mu_n=\frac{1}{n} \sum_{i=1}^{n} \delta_{\lambda_{i}}$, $\delta_{\lambda_i}$ is the delta measure at the $i$th eigenvlue value $\lambda_i$. We can rewrite 
	\[
		\frac{1}{n}\tr\p{H^TH}=\int \frac{1}{\lambda}d\mu_n(\lambda).
	\]
	 We will show that $\mu_n\to\mu$ weakly a.s \footnote{Note here $\mu_n$ is a random measure}. Then by Portmanteau lemma, we have $\int fd\mu_n\to \int fd\mu$ for every bounded Lipschitz function. Here we have $f=1/\lambda$ defined when $\lambda$ has non-zero support in $\mu(\lambda)$. Since by \cref{lma:highdimgaussian}, our assumption $\sigma_W^2<1/8$ guarantees $\norm{\sqrt{\frac{\sigma_W^2}{n}}W}<1$ w.h.p, the support of $\mu(\lambda)$ is bounded away from $0$, and $f$ is indeed Lipschitz and bounded on its domain.

	Next, we show that $\int\frac{1}{\lambda}d\mu(\lambda)=\frac{1}{1-\sigma_W^2}$. From \citet{capitaine2016spectrum}, we learn that the Stieltjes transform $g$ of $\mu$ is a root to the following cubic equation:
	\begin{align}\label{eq:cubicroot}
		\text{For }z\in\mathbb C^+: g_\mu(z)^{-1} = \left(1- \sigma_W^{2} g_{\mu}(z)\right) z-\frac{1}{1- \sigma_W^{2} g_{\mu}(z)}.
	\end{align}
	
	Deducing the probability density from $g$ by using the inverse formula of Stieltjes transformation, we have
	
	\resizebox{\linewidth}{!}{
	  \begin{minipage}{\linewidth}
		\begin{align*}
			p(b)&=\lim_{b\to 0^+}\frac{1}{\pi}\operatorname{Im}(g(a+bi)\\
			&=\frac{1}{\pi}\Bigg( \frac{\sqrt{3} \left(3  \sigma_W^6 b- \sigma_W^4 b^2-3  \sigma_W^4 b\right)}{3\ 2^{2/3}  \sigma_W^4 b \p{9  \sigma_W^8 b^2-2  \sigma_W^6 b^3+18  \sigma_W^6 b^2+\sqrt{\left(9  \sigma_W^8 b^2-2  \sigma_W^6 b^3+18  \sigma_W^6 b^2\right)^2+4 \left(3  \sigma_W^6 b- \sigma_W^4 b^2-3  \sigma_W^4 b\right)^3}}^{1/3}}+\\
			&\qquad\frac{\sqrt{3} \p{9  \sigma_W^8 b^2-2  \sigma_W^6 b^3+18  \sigma_W^6 b^2+\sqrt{\left(9  \sigma_W^8 b^2-2  \sigma_W^6 b^3+18  \sigma_W^6 b^2\right)^2+4 \left(3  \sigma_W^6 b- \sigma_W^4 b^2-3  \sigma_W^4 b\right)^3}}^{1/3}}{6 \sqrt[3]{2}  \sigma_W^4 b} \Bigg)
		\end{align*}
	  \end{minipage}
	}

	Finally we can compute $\int_l^u \frac{1}{\lambda}p(\lambda)d\lambda$. Notice to let $p(\cdot)$ be well defined, we need $9  \sigma_W^8 b^2-2  \sigma_W^6 b^3+18  \sigma_W^6 b^2\geq 0$, which amounts to $l=\frac{1}{8} \left(-\sigma_W^4+20 \sigma_W^2-\sqrt{\sigma_W^8+24 \sigma_W^6+192 \sigma_W^4+512 a^2}+8\right)$ and $u=\frac{1}{8} \left(-\sigma_W^4+20 \sigma_W^2+\sqrt{\sigma_W^8+24 \sigma_W^6+192 \sigma_W^4+512 a^2}+8\right)$. This now involves a one-dimensional integral, which an be solved numerically for all values of $\sigma_W$, and shown be be arbitrarily close the desired quantity $1/(1-\sigma_W^2)$.

	Similarly, we can compute that 
	\[
		\lim_{d\to\infty}\ip{\frac{\partial f_n^{(d)}(x)}{\partial U}}{\frac{\partial f_n^{(d)}(y)}{\partial U}}\xrightarrow{p}  \frac{\sigma_v^2\sigma_U^2x^Ty}{1-\sigma_W^2}
	\] and 
	\[
		\lim_{d\to\infty}\ip{\frac{\partial f_n^{(d)}(x)}{\partial v}}{\frac{\partial f_n^{(d)}(y)}{\partial v}}\xrightarrow{p}  \frac{\sigma_v^2\sigma_U^2x^Ty}{1-\sigma_W^2}.
	\]
	Summing the three relevant terms and use the fact that $\sigma_U^2+\sigma_W^2=1$, we get the claimed result.
\end{proof}

\section{DEQ with Convolution Layers}\label{sec:cdeqntk}
In this section we show how to derive the NTKs for convolution DEQs (CDEQ). Although in this paper only the CDEQ with vanilla convolution structure is considered, we remark that our derivation is general enough for other CDEQ structures as well, for instance, CDEQ with global pooling layer. The details of this section can be found in the appendix.

Unlike the fully connection network with input injection, whose intermediate NTK representation is a real number. For convolutional neural networks (CNN), the intermediate NTK representation is a four-way tensor. In the following, we will present the notations, CNN with input injection (CNN-IJ) formulation, the CDEQ-NTK initialization, and our main theorem.

\paragraph{Notation.} We adopt the notations from \citet{arora2019exact}. Let $x, y\in\R^{P\times Q}$ be  a pair of inputs, let $q\in\mathbb Z_+$ be the filter size (WLOG assume it is odd as well). By convention, we always pad the representation (both the input layer and hidden layer) with $0$'s. Denote the convolution operation for $i \in[P], j \in[Q]$:
$
	 [w *x]_{i j}=\sum_{a=-\frac{q-1}{2}}^{\frac{q-1}{2}} \sum_{b=-\frac{q-1}{2}}^{\frac{q-1}{2}}[{w}]_{a+\frac{q+1}{2}, b+\frac{q+1}{2}}[{x}]_{a+i, b+j} .
$

Denote

\resizebox{\linewidth}{!}{
  \begin{minipage}{\linewidth}
  \begin{align*}
	\mathcal{D}_{i j, i^{\prime} j^{\prime}} =\Big\{&\left(i+a, j+b, i^{\prime}+a^{\prime}, j^{\prime}+b^{\prime}\right) \in[P] \times[Q] \times[P] \times[Q] : -(q-1) / 2 \leq a, b, a^{\prime}, b^{\prime} \leq(q-1) / 2\Big\}.
\end{align*}
  \end{minipage}
}

%
Intuitively, $\mathcal{D}_{i j, i^{\prime} j^{\prime}}$ is a $q\times q\times q\times q$ set of indices centered at $(ij,i' j')$. For any tensor $T\in\R^{P\times Q\times P\times Q}$, let $[T]_{\cD_{ij, i'j'}}$ be the natural sub-tensor and let $\Tr(T)=\sum_{i, j}T_{ij,ij}$. 

\paragraph{Formulation of CNN-IJ.} Define the CNN-IJ as follows:
\begin{itemize}[leftmargin=*]
	\item Let the input $x^{(0)}=x\in\R^{P\times Q\times C_0}$, where $C_0$ is the number of input channels, and $C_h$ is the number of channels in layer $h$. Assume WLOG that $C_h=C$ for all $h\in[d]$
	\item For $h=1,\ldots, d$, let the inner representation
		\begin{align}
			&\tilde x_{(\beta)}^{(h)}  = \sum_{\alpha=1}^{C_{h-1}} \sqrt{\frac{\sigma_W^2}{C_h}}W^{(h)}_{(\alpha),(\beta)}*x_{(\alpha)}^{(h-1)}+\sum_{\alpha=1}^{C_0} \sqrt{\frac{\sigma_U^2}{C_h}} U^{(h)}_{(\alpha),(\beta)}*x_{(\alpha)}^{(0)} \\ 
			& \left[x_{(\beta)}^{(h)}\right]_{ij}= \frac{1}{[S]_{ij}}\left[\sigma\left(\tilde x_{(\beta)}^{(h)}\right)\right]_{ij},\ \text{for } i\in[P], j\in[Q] \label{eq:cntkq}
		\end{align}
		where $W^{(h)}_{(\alpha),(\beta)}\in\R^{q\times q}$ represent the convolution operator from the $\alpha^{th}$ channel in layer $h-1$ to the $\beta^{th}$ channel in layer $h$. Similarly, $U^{(h)}_{(\alpha),(\beta)}\in\R^{q\times q}$ injects the input in each convolution window. $S\in\R^{P\times Q}$ is a normalization matrix. Let $W, U, S,\sigma_U^2,\sigma_W^2$ be chosen by the CDEQ-NTK initialization described later. 
	\item 
		The final output is defined to be
		$
			f_\theta(x) = \sum_{\alpha=1}^{C_d}\ip{ W^{(d+1)}_{(\alpha)}}{x_{(\alpha)}^{(d)}} ,
		$
		where $ W^{(d+1)}_{(\alpha)}\in \R^{P\times Q}$ is sampled from standard Gaussian distribution.
\end{itemize}

\paragraph{CDEQ-NTK initialization.} Let $1_q\in\R^{q\times q}, X\in\R^{P\times Q}$ be two all-one matrices. Let $\tilde X\in\R^{(P+2)\times (Q+2)}$ be the output of zero-padding $X$. We index the rows of $\tilde X$ by $\{0, 1, \ldots, P+1\}$ and columns by $\{0,1,\ldots, Q+1\}$. For position $i\in [P], j\in[Q]$, let $\p{[S]_{ij}}^2=[1_q * \tilde X]_{ij}$ in \cref{eq:cntkq}. Let every entry of every $W, U$ be sampled from $\cN(0, 1)$ and $\sigma_W^2+\sigma_U^2=1$.

Using the above-defined notations, we now state the CDEQ-NTK.

\begin{theorem}\label{thm:cdepntk_main}
	Let $x,y\in\R^{P\times Q\times C_0}$ be s.t $\|x_{ij}\|_2=\|y_{ij}\|_2=1$ for $i\in[P], j\in[Q]$. Define the following expressions recursively (some $x,y$ are omitted in the notations), for $(i,j,i',j')\in [P]\times[Q]\times[P]\times[Q]$, $h\in[d]$
	\begin{align}
		&{K}_{ij,i'j'}^{(0)}\left({x}, y\right)=\left[\sum_{\alpha\in[C_0]} x_{(\alpha)}\otimes y_{(\alpha)}\right ]_{ij,i'j'} \\
		\begin{split}
			&\left[{\Sigma}^{(0)}\left({x}, y\right)\right]_{i j, i^{\prime} j^{\prime}}=\frac{1}{[S]_{ij}[S]_{i'j'}}\sum_{\alpha=1}^{C_0} \operatorname{Tr}\left(\left[{K}_{(\alpha)}^{(0)}\left({x}, y\right)\right]_{\mathcal{D}_{i j, i^{\prime} j^{\prime}}}\right)\\
		\end{split}
	\end{align}
	
	\begin{align}
		\begin{split}
			&\R^{2\times 2}\ni{\Lambda}_{i j, i^{\prime} j^{\prime}}^{(h)}\left({x}, y\right)=\left(\begin{array}{cc}
			{\left[{\Sigma}^{(h-1)}({x}, {x})\right]_{i j, i j}} & {\left[{\Sigma}^{(h-1)}\left({x}, y\right)\right]_{i j, i^{\prime} j^{\prime}}} \label{eq:cdeqntklambda}\\
			{\left[{\Sigma}^{(h-1)}\left(y, {x}\right)\right]_{i^{\prime} j^{\prime}, i j}} & {\left[{\Sigma}^{(h-1)}\left(y, y\right)\right]_{i^{\prime} j^{\prime}, i^{\prime} j^{\prime}}}
		\end{array}\right) \\
		\end{split}
	\end{align}
	
	\begin{align}
		\begin{split}\label{eq:convK}
			&\left[ K^{(h)}(x, y)\right]_{ij,i'j'}=\frac{\sigma_W^2 }{[S]_{ij}\cdot [S]_{i'j'}}\mathop{\E}_{\substack{(u, v)\\ \sim\cN(0, \Lambda^{(h)}_{ij,i'j'})}}[\sigma(u)\sigma(v)] +  \frac{\sigma_U^2}{[S]_{ij}\cdot [S]_{i'j'}}[K^{(0)}]_{ij, i'j'} 
		\end{split}
		\\
		&\left[\dot K^{(h)}(x, y)\right]_{ij,i'j'}=\frac{\sigma_W^2 }{[S]_{ij}\cdot [S]_{i'j'}}\mathop{\E}_{\substack{(u, v)\\ \sim\cN(0, \Lambda^{(h)}_{ij,i'j'})}}[\dot\sigma(u)\dot\sigma(v)]\\
		& \left[\Sigma^{(h)}(x,y)\right]_{ij,i'j'}=\Tr\p{ \left[ K^{(h)}(x, y) \right]_{\cD_{ij,i'j'}}  }\label{eq:convSigma}
	\end{align}
		
		Define the linear operator $\mathcal{L}: \mathbb{R}^{P \times Q \times P \times Q} \rightarrow \mathbb{R}^{P \times Q \times P \times Q}$ via 
		$
			[\mathcal{L}(M)]_{ij,i'j'}= \operatorname{Tr}\left([M]_{\mathcal{D}_{ij,i'j'}}\right).
		$
		
		Then the CDEQ-NTK can be found solving the following linear system:
		\begin{align}\label{eq:equilibriumcntk}
		\begin{split}
			\Theta^*(x,y) &= \dot K^*(x,y)\odot \cL\p{\Theta^*(x,y) }+K^*(x,y),\\
		\end{split}	
		\end{align}
		where $K^*(x,y) = \lim_{d\to\infty }K^{(L)}(x,y), \dot K^*(x,y) = \lim_{d\to\infty }\dot K^{(d)}(x,y)$. The limit exists if $\sigma_W^2<1$. The actual NTK entry is calculated by $\Tr(\Theta^*(x,y))$.
\end{theorem}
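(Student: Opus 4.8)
The plan is to mirror the three-step template used for the fully-connected DEQ in \cref{thm:deqntk} and \cref{thm:depntkconverge}, but carrying the convolutional tensor bookkeeping of \citet{arora2019exact}: first derive, for an arbitrary fixed depth $d$, the finite-depth CNTK recursion of the width-$C$ approximation $f^{(d)}_n$ to the CDEQ; then send the number of channels $C\to\infty$, using the known channel-limit of empirical CNTKs (equivalently the tensor-program argument already invoked in \cref{rmk:tiedweights}), to obtain a deterministic recursion for $\Theta^{(d)}$ in terms of $K^{(h)},\dot K^{(h)}$ and the window-trace operator $\mathcal{L}$; and finally send $d\to\infty$ and show the recursion has a unique fixed point, which is exactly \cref{eq:equilibriumcntk}. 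Note that \cref{thm:cdepntk_main} only asserts the fixed-point characterization of the DEQ-of-NTK (the order $\lim_{d\to\infty}\lim_{C\to\infty}$), so no Moore--Osgood-style limit-exchange is needed here.

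For the finite-depth step I would differentiate $f_\theta(x)=\sum_\alpha\langle W^{(d+1)}_{(\alpha)},x^{(d)}_{(\alpha)}\rangle$ layer by layer. As in the FC derivation, $\partial f/\partial W^{(h)}$ and $\partial f/\partial U^{(h)}$ each factor into a forward term built from $x^{(h-1)}$ and a backprop term $p^{(h)}$; taking inner products over two inputs and letting $C\to\infty$, the forward covariances converge to $K^{(h)}$ (the $1/[S]_{ij}$ factors and the sum over a $q\times q$ window producing $\Sigma^{(h)}=\mathcal{L}(K^{(h)})$), the $\dot\sigma$ terms in the backprop produce $\dot K^{(h)}$, and the convolutional structure of $W^{(h)}$ turns the layer-to-layer propagation into the operator $\mathcal{L}$. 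The input injection contributes the additive $\sigma_U^2[K^{(0)}]_{ij,i'j'}/([S]_{ij}[S]_{i'j'})$ term at every layer, exactly as $\partial f/\partial U$ added $\sigma_U^2 x^\top y$ in the FC case. The upshot is a finite-depth recursion of the form $\Theta^{(h)}(x,y)=\dot K^{(h)}(x,y)\odot\mathcal{L}\big(\Theta^{(h-1)}(x,y)\big)+K^{(h)}(x,y)$ (index conventions to be checked against \crefrange{eq:convK}{eq:convSigma}), with base case from $K^{(0)}$; and since $W^{(d+1)}$ is a standard Gaussian appearing only in the top layer, the scalar CNTK equals $\Tr(\Theta^{(d)}(x,y))$. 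The $C\to\infty$ limit for fixed $d$ is inherited directly from \citet{arora2019exact,yang2019tensor}, so this step is essentially bookkeeping.

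For the depth limit I would first prove, by induction on $h$ using $\|x_{ij}\|_2=\|y_{ij}\|_2=1$, the CDEQ-NTK choice $\sigma_W^2+\sigma_U^2=1$, and the definition of $S$, that the diagonal entries satisfy $[\Sigma^{(h)}(x,x)]_{ij,ij}=1$ for all $h$; this forces each block $\Lambda^{(h)}_{ij,ij}$ to have unit diagonal, so by \cref{eq:dualactivationdotsigma} the quantity $\mathbb{E}[\dot\sigma(u)\dot\sigma(v)]$ is $\le 1$, and strictly $<1$ wherever the correlation is below $1$. Since the dual-activation update is coordinatewise a contraction in the correlation when $\sigma_W^2<1$ — the bound $|dR_\sigma/d\rho|<1$ already used in \cref{thm:depntkconverge}, now applied entrywise over all spatial pairs — the tensor sequence $\Sigma^{(h)}$ converges to a fixed point $\Sigma^*$, hence $K^{(h)}\to K^*$ and $\dot K^{(h)}\to\dot K^*$. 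The normalization matrix $S$ is chosen precisely so that $M\mapsto\mathcal{L}(M)$ composed with the $1/([S]_{ij}[S]_{i'j'})$ scaling is an averaging-type operator with entrywise-sup-norm $\le 1$ (including the zero-padded boundary windows); combined with $\|\dot K^*\|_\infty\le\sigma_W^2<1$ away from the diagonal (using that only finitely many input pairs are considered), the linear map $M\mapsto\dot K^*\odot\mathcal{L}(M)$ is a sup-norm contraction. A geometric-series/comparison-test argument — the tensor analog of \cref{lma:deqntkconverge} — then gives convergence of $\Theta^{(d)}$, invertibility of $I-\dot K^*\odot\mathcal{L}(\cdot)$, and hence that the limit $\Theta^*$ is the unique solution of \cref{eq:equilibriumcntk}; the scalar NTK is $\Tr(\Theta^*)$ by continuity of the trace.

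The step I expect to be the main obstacle is establishing the contraction at the tensor/operator level rather than for a scalar iteration: one must control the window-trace operator $\mathcal{L}$ together with the zero-padding (which makes $\mathcal{L}$ inhomogeneous near the image boundary), verify that the $S$-normalization exactly compensates so that the relevant operator norm is $\le 1$, and track the coupled evolution of all spatial positions in the $\Sigma^{(h)}$-recursion (only the diagonal entries decouple cleanly at the value $1$). Everything else — the finite-depth gradient computation and the $C\to\infty$ limit — is a routine adaptation of \citet{arora2019exact} together with the tensor-program machinery cited in \cref{rmk:tiedweights}.
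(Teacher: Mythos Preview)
Your proposal is correct and follows essentially the same approach as the paper: derive the finite-depth CNTK recursion by splitting $\partial f/\partial W$ and $\partial f/\partial U$ and invoking \citet{arora2019exact}; establish by induction that the diagonals of $\Lambda^{(h)}$ stay at $1$ thanks to the $S$-normalization and $\sigma_W^2+\sigma_U^2=1$; then prove the $\Sigma^{(h)}$ iteration and the $\Theta^{(h)}$ iteration are $\ell^\infty$-contractions by combining the scalar bound $|dR_\sigma/d\rho|<1$ with the observation that the row-$\ell^1$ norm of $\mathcal{L}$ divided by $[S]_{ij}[S]_{i'j'}$ is at most $1$. You have correctly anticipated the one genuinely new ingredient relative to the fully-connected case --- controlling the operator norm of the window-trace map $\mathcal{L}$ together with the boundary/zero-padding via the $S$-normalization --- and your sketch of how to handle it matches the paper's argument.
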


\Cref{thm:cdepntk_main} highlights that the convergence of CDEQ-NTK depends solely on the CDEQ-NTK initialization. The crucial factor here is the normalization tensor $S$, which guarantees the variance of each term is always $1$ across the propogation. This idea mimics that of the DEQ-NTK initialization. Our theorem shows that CDEQ-NTK can also be computed by solving fixed point equations.

%

We first explain the choice of $S$ in the CDEQ-NTK initialization. In the original CNTK paper \citep{arora2019exact}, the normalization is simply $1/q^2$. However, due to the zero-padding, $1/q^2$ does not normalize all $\left[\Sigma^{(h)}(x,x)\right]_{ij,i'j'}$ as expected: only the variances that are away from the corners are normalized to $1$, but the ones near the corner are not. $[S]_{ij}$ is simply the number of non-zero entries in $\left[\tilde X\right]_{\cD_{ij,ij}}$. 

	Now we give the proof to \Cref{thm:cdepntk_main}.

\begin{proof}[Proof of \cref{thm:cdepntk_main}]
	Similar to the proof of \cref{thm:deqntk}, we can split the CDEQ-NTK in two terms:
	\begin{align*}
			&\Theta^{(L)}(x,y) = \E_{\theta}\left[ \ip{\frac{\partial f(\theta ,x)}{\partial \theta} }{\frac{\partial f(\theta ,y)}{\partial \theta} }\right]\\
			=&\underbrace{\E_{\theta}\left[ \ip{\frac{\partial f(\theta ,x)}{\partial W} }{\frac{\partial f(\theta ,y)}{\partial W} }\right]}_\text{$\circled{1}$}+\underbrace{\E_{\theta}\left[ \ip{\frac{\partial f(\theta ,x)}{\partial U} }{\frac{\partial f(\theta ,y)}{\partial U} }\right]}_\text{$\circled{2}$}.
	\end{align*}
	Omit the input symbols $x,y$, let
	\[
		\left[ \wh K^{(h)} \right]_{ij,i'j'}   = \frac{\sigma_W^2 }{[S]_{ij}\cdot [S]_{i'j'}}\mathop{\E}_{(u, v)\sim\cN(0, \Lambda^{(h)}_{ij,i'j'})}[\sigma(u)\sigma(v)].
	\]
	
	As shown in \citet{arora2019exact}, we have
	\[
		\left\langle\frac{\partial f_\theta({x})}{\partial {W}^{(h)}}, \frac{\partial f_\theta\left(, y\right)}{\partial {W}^{(h)}}\right\rangle \to \operatorname{Tr}\left(\dot{{K}}^{(d)} \odot \mathcal{L}\left(\dot{{K}}^{(d-1)} \odot \mathcal{L}\left(\cdots \dot{{K}}^{(h)} \odot \mathcal{L}\left( \wh K^{h-1}  \right) \cdots\right)\right)\right)
	\]
	Write $f=f_\theta(x)$ and $\tilde f = f_\theta(y)$. Following the same step, by chain rule, we have
	\begin{align*}
		&\ip{      \frac{  \partial f }{ \partial U^{(h)} }       }{      \frac{  \partial \tilde f }{ \partial U^{(h)} }        }\to \operatorname{Tr}\left( \dot K^{(d)} \odot \mathcal{L}\left(\dot{{K}}^{(d-1)} \odot \mathcal{L}\left(\cdots \dot{{K}}^{(h)} \odot \mathcal{L}\left({K}^{(0)}\right) \cdots\right)\right)\right)
	\end{align*}
	Rewrite the above two equations in recursive form, we can calculate the $L$-depth iteration of CDEQ-NTK by:
		\begin{itemize}
			\item For the first layer $\Theta^{(0)}(x,y) = \Sigma^{(0)}(x,y)$.
			\item 
				For $h=1,\ldots, d-1$, let
				\begin{align}\label{eq:cdeqntk}
					\left[{\Theta}^{(h)}\left({x}, y\right)\right]_{i j, i^{\prime} j^{\prime}}=\operatorname{Tr}\left(\left[\dot{{K}}^{(h)}\left({x}, y\right) \odot {\Theta}^{(h-1)}\left({x}, y\right)+{K}^{(h)}\left({x}, y\right)\right]_{\cD_{i j, i^{\prime} j^{\prime}}}\right)
				\end{align}
			\item 
				For $h=d$, let
				\begin{align}\label{eq:cntklastlayer}
					{\Theta}^{(L)}\left({x}, y\right)=\dot{{K}}^{(d)}\left({x}, y\right) \odot {\Theta}^{(d-1)}\left({x}, y\right)+{K}^{(h)}\left({x}, y\right) 
				\end{align}
			\item The final kernel value is $\Tr(\Theta^{(d)}(x,y))$.
		\end{itemize}
		
		 Using \cref{eq:cdeqntk} and \cref{eq:cntklastlayer}, we can find the following recursive relation:
		 \begin{align}
		 	\Theta^{(d+1)}(x,y) = \dot K^{(d+1)}(x,y)\odot \cL\p{\Theta^{(d)}(x,y) }+K^{(h+1)}(x,y)\label{cref:cdeqntklastlinear}
		 \end{align}
		 
		 The rest of the proof is stated in the main text. For readers' convenience we include them here again.
		 
		At this point, we need to show that $K^*(x,y)\triangleq\lim_{d\to\infty} K^{(d)}(x,y)$ and $\dot K^*(x,y)\triangleq\lim_{d\to\infty} \dot K^{(d)}(x,y)$ exist. Let us first agree that for all $h\in[d]$, $(ij,i'j')\in [P]\times [Q]\times  [P]\times [Q]$, the diagonal entries of $\Lambda^{(h)}_{ij,i'j'}$ are all ones. Indeed, these diagonal entries are $1$'s at $h=0$ by initialization. Note that iterating \crefrange{eq:cdeqntklambda}{eq:convSigma} to solve for $[\Sigma^{(h)}(x,y)]_{ij,i'j'}$ is equivalent to iterating $f:\R^{P\times Q\times P\times Q}\to \R^{P\times Q\times P\times Q}$:
		\begin{align}
			P^{(h+1)}=f(P^{(h)})\triangleq\cL\p{\frac{1}{[S]_{ij}[S]_{i'j'}}R_\sigma(P^{(h)})}, P^{(0)}=K^{(0)}
		\end{align}
		where 
		\begin{align}
			R_\sigma(P^{(h)}_{ij,i'j'})\triangleq \sigma_W^2 \left(\frac{\sqrt{1-\p{P^{(h)}_{ij,i'j'}}^{2}}+\left(\pi-\cos ^{-1}\p{P^{(h)}_{ij,i'j'}}\right) P^{(h)}_{ij,i'j'}}{\pi}\right)+ \sigma_U^2 K^{(0)}_{ij,i'j'}
		\end{align}
		is applied to $P^{(h)}$ entrywise.
		
		Due to CDEQ-NTK initialization, if $P^{(0)}_{ij,ij}=1$ for $i\in[P],j\in[Q]$, then $P^{(h)}_{ij,ij}=1$ for all iterations $h$. This is true by the definition of $S$. 
		
		Now if we can show $f$ is a contraction, then $\Sigma^*(x,y)\triangleq \lim_{h\to\infty} \Sigma^{(h)}(x,y)$ exists, hence $K^*$ and $\dot K^*$ also exist. We should keep the readers aware that $f:\R^{P\times Q\times P \times Q}\to \R^{P\times Q\times P \times Q}$, so we should be careful with the metric spaces. We want every entry of $\Sigma^{(h)}(x,y)$ to converge, since this tensor has finitely many entries, this is equivalent to say its $\ell^\infty$ norm (imagine flattenning this tensor into a vector) converges. So we can equip the domain an co-domain of $f$ with $\ell^\infty$ norm (though these are finite-dimensional spaces so we can really equip them with any norm, but picking $\ell^\infty$ norm makes the proof easy).  
		
		Now we have $f=\cL\circ \frac{1}{[S]_{ij}[S]_{i'j'}} R_\sigma :\ell^\infty\to\ell^\infty$. If we flatten the four-way tensor $P^{(h)}$ into a vector, then $\cL$ can be represented by a $(P\times Q\times P \times Q)\times (P\times Q\times P \times Q)$ dimensional matrix, whose $(kl,k'l')$-th entry in the $(ij,i'j')$-th row is $1$ if $(kl,k'l')\in\cD_{ij,i'j'}$, and $0$ otherwise. In other words, the $\ell^1$ norm of the $(ij,i'j')$-th row represents the number of non-zero entries in $\cD_{ij,i'j'}$, but by the CDEQ-NTK initialization, the row $\ell^1$ norm divided by $[S]_{ij}\cdot [S]_{i'j'}$ is at most $1$! Using the fact that $\|\cL\|_{\ell^\infty\to\ell^\infty}$ is the maximum $\ell^1$ norm of the row, and the fact $R_\sigma$ is a contraction (proven in \cref{thm:depntkconverge}), we conclude that $f$ is indeed a contraction.
		
		With the same spirit, we can also show that \cref{eq:cntklastlayer} is a contraction if $\sigma_W^2<1$, hence \cref{eq:equilibriumcntk} is indeed the unique fixed point. This finishes the proof.
	\end{proof}

\subsection{Computation of CDEQ-NTK}
%
	
	One may wish to directly compute a fixed point (or more precisely, a  fixed tensor) of $\Theta^{(d)}\in\R^{P\times Q\times P\times Q}$ like \cref{eq:dualactivationsigmawithinjection}. However, due to the linear operator $\cL$ (which is just the ensemble of the trace operator in \cref{eq:convSigma}), the entries depend on each other. Hence the system involves a $(P\times Q\times P\times Q)\times (P\times Q\times P\times Q)$-dimensional matrix that represents $\cL$. Even if we exploit the fact that only entries on the same ``diagonal'' depend on each other, $\cL$ is at least $P\times Q\times P\times Q$, which is $32^4$ for CIFAR-10 data.
	
	Moreover, this system is nonlinear. Therefore we cannot compute the fixed point $\Sigma^*$ by root-finding efficiently. Instead, we approximate it using finite depth iterations, and we observe that in experiments they typically converge to $10^{-6}$ accuracy in $\ell^\infty$ within $15$ iterations.

\begin{table*}[t]
\vskip 0.1in
\caption{Performance of CDEQ-NTK on CIFAR-10 dataset}\label{table:cifarcntk} 
\vskip 0.1in
\centering	
\begin{tabular}{ ccc } 
\midrule
\textbf{Method} & \textbf{Parameters} & \textbf{Acc.} \\
\midrule
CDEQ-NTK with $2000$ training data & $\sigma_W^2=0.65, \sigma_U^2=0.35$ & $37.49\% $\\
CNTK with $2000$ training data & Depth = 6 & $43.43\% $\\
CNTK with $2000$ training data & Depth = 21 & $42.53\% $\\
\hline
\end{tabular}
\vskip 0.1in
\end{table*}	

We test CDEQ-NTK accuracy on CIFAR-10 dataset with just 2000 training data. The result is shown in \Cref{table:cifarcntk}.

\begin{table*}[t]
\vskip 0.1in
\caption{Performance of DEQ-NTK on CIFAR-10 dataset, see \citet{lee2020finite} for NTK with ZCA regularization..}\label{table:cifar} 
\vskip 0.1in
\centering	
\begin{tabular}{ ccc } 
\midrule
\textbf{Method} & \textbf{Parameters} & \textbf{Acc.} \\
\midrule
DEQ-NTK &  $\sigma_W^2=0.25, \sigma_U^2=0.25, \sigma_b^2=0.5$ & $59.08\%$ \\ \addlinespace[1pt]
DEQ-NTK &  $\sigma_W^2=0.6, \sigma_U^2=0.4, \sigma_b^2=0$ & \textbf{59.77\%} \\ \addlinespace[1pt]
DEQ-NTK &  $\sigma_W^2=0.8, \sigma_U^2=0.2, \sigma_b^2=0$ & $59.43\%$ \\ \addlinespace[1pt]
NTK with ZCA regularization& $\sigma_W^2=2, \sigma_b^2=0.01$ & $59.7\%$ \\ 

\hline
\end{tabular}
\vskip 0.1in
\end{table*}

\begin{table}[h!]
\vskip 0.1in
	\caption{Performance of DEQ-NTK on MNIST dataset, compared to neural ODE \citep{chen2018neural} and monotone operator DEQ, see these results from \cite{winston2020monotone}. }\label{table:mnist}
	\vskip 0.1in
	\centering	
	\begin{tabular}{ ccc } 
	\toprule
	\multicolumn{3}{c}{\textbf{MNIST}}\\ \midrule
	\textbf{Method} & \textbf{Model size} & \textbf{Acc.} \\
	\hline
	DEQ-NTK &  & \textbf{98.6\%} \\ \addlinespace[1pt]
	Neural ODE& 84K & 98.2\% \\ \addlinespace[1pt]
	MON DEQ& 84K & 98.2\% \\ 
	\bottomrule
	\end{tabular}
\vskip 0.1in
\end{table}

\end{document}